\documentclass[runningheads]{llncs}
\usepackage[numbers,sort&compress]{natbib}
\usepackage{subcaption}
\usepackage{placeins}
\usepackage{float}
\usepackage{fix-cm}
\usepackage{etex}

\usepackage{dblfloatfix}

\usepackage{nag}

\makeatletter
\@ifpackageloaded{xcolor}{}{%
\usepackage[table,x11names,dvipsnames,svgnames]{xcolor}%
}
\makeatother

\usepackage{colortbl}

\usepackage{graphicx}
\graphicspath{{./figures/}}

\usepackage{wrapfig}

\definecolorset{RGB}{lyft}{}{Red,194,39,36;Sunset,202,53,33;Orange,205,68,20;Amber,200,117,42;Yellow,242,169,52;Citron,186,188,44;Lime,112,159,33;Green,56,139,31;Mint,45,118,56;Teal,52,133,135;Cyan,60,132,202;Blue,55,94,248;Indigo,64,13,247;Purple,115,42,248;Pink,176,25,145;Rose,176,32,75}

\definecolorset{HTML}{h}{}{grapefruit,ED5565;bittersweet,FC6E51;sunflower,FFCE54;grass,A0D468;mint,48CFAD;aqua,4FC1E9;bluejeans,5D9CEC;lavender,AC92EC;pinkrose,EC87C0;lightgray,F5F7FA;mediumgray,CCD1D9;darkgray,656D78}

\definecolorset{RGB}{p}{}{ivory,221,221,221;navy,046,037,133;pine,051,117,056;aqua,093,168,153;azure,148,203,236;sand,220,205,125;peach,194,106,119;carnation,159,074,150;mulberry,126,041,084}

\definecolorset{RGB}{o}{}{black,000,000,000;mint,000,158,115;jeans,000,114,178;azure,086,180,233;banana,240,228,066;orange,230,159,000;ember,213,094,000;amaranth,204,121,167}

\usepackage{microtype}

\usepackage[american]{babel}
\usepackage{array}
\usepackage{multirow}
\usepackage{booktabs}
\usepackage{makecell} 

\ifcsname labelindent\endcsname

\fi
\usepackage[inline]{enumitem}

\usepackage{subcaption}

\newenvironment{lenumerate}[2][]
{\begin{enumerate}[label=(#2\arabic*),leftmargin=0.2in,itemindent=0.15in,#1]}
{\end{enumerate}}

\setlist*[enumerate,1]{label={\itshape\arabic*)}}

\makeatletter
\newcommand{\paragraphswithstop}{%
\let\copyparagraph\paragraph%
\renewcommand\paragraph[1]{\copyparagraph{##1.}}%
}
\makeatother

\usepackage[framemethod=tikz]{mdframed}

\makeatletter
\def\namedlabel#1#2{\begingroup
  #2%
  \def\@currentlabel{#2}%
  \phantomsection\label{#1}\endgroup
}
\makeatother

\makeatletter
\def\namedlabelphantom#1#2{\begingroup
  \def\@currentlabel{#2}%
  \phantomsection\label{#1}\endgroup
}
\makeatother

\newcommand{\parunskip}{\bgroup\unskip\parfillskip=0pt \par\egroup}

\input{preamble/math}
\newcommand{\real}[1]{\mathbb{R}^{#1}{}}

\newcommand{\bmat}[1]{\begin{bmatrix}#1\end{bmatrix}}

\newcommand{\defeq}{\doteq}

\DeclarePairedDelimiter{\norm}{\lVert}{\rVert}

\newcommand{\subjectto}{\textrm{subject to}\;}

\providecommand{\cC}{\mathcal{C}}

\providecommand{\cU}{\mathcal{U}}

\usepackage{units}

  \newcommand{\newcolorlabel}[2]{%
  \expandafter\newcommand\csname #1\endcsname[1]{%
    \tikz[baseline]{\node[text=white,fill=#2,anchor=base,text height=1.3ex,text depth=0.1ex,font=\sffamily\bfseries]{##1}}}%
}

\newcommand{\newcommenter}[2]{%
  \expandafter\newcommand\csname #1\endcsname[1]{%
    \fcolorbox{#2}{#2}{\color{white}\textsf{\textbf{#1}}}
    {\color{#2}##1}}%
  \expandafter\newcommand\csname #1p\endcsname[1]{%
    \pdfcomment[color=#2,voffset=1Em]{#1:##1}}%
  \expandafter\newcommand\csname at#1\endcsname{%
    \fcolorbox{#2}{#2}{\color{white}\textsf{\textbf{@#1}}}
    {\color{#2}}}%
  \expandafter\newcommand\csname #1cite\endcsname[1]{%
    \csname #1\endcsname {[##1]}
  }%
  \expandafter\newcommand\csname #1ref\endcsname[1]{%
    \csname #1\endcsname {$\blacktriangleright$##1}
  }%
  \expandafter\newcommand\csname #1hl\endcsname[2]{%
    \colorbox{#2}{\color{white}\textsf{\textbf{#1}}}\sethlcolor{Azure2}\hl{##2}~%
    \expandafter\ifx\csname commentarrow\endcsname\relax$\leftarrow$\else \commentarrow[#2]\fi~%
    {\color{#2}##1}}%
  \expandafter\newcommand\csname #1hlp\endcsname[2]{%
    \sethlcolor{#2!50!white}\hl{##2}%
    \pdfmargincomment[color=#2,voffset=1Em,icon=Note,open=true,author=#1]{#1:##1}}%
  \expandafter\newcommand\csname #1st\endcsname[2]{%
    \colorbox{#2}{\color{white}\textsf{\textbf{#1}}}\sout{##2}~%
    \expandafter\ifx\csname commentarrow\endcsname\relax$\leftarrow$\else \commentarrow[#2]\fi~%
    {\color{#2}##1}}%
}
\newcommenter{TODO}{DodgerBlue1}
\newcommenter{rtron}{OliveDrab2}

\usepackage{comment}

\usepackage{pdfcomment}

\usepackage{soul}

\usepackage[normalem]{ulem}

\usepackage{csquotes}

\usepackage{listings}
\usepackage{suffix}

\usepackage{environ}

\makeatletter
\newsavebox{\boxifnotempty}
\newcommand{\displayifnotempty}[3]{\sbox\boxifnotempty{#2}\setbox0=\hbox{\usebox{\boxifnotempty}\unskip}%
  \ifdim\wd0=0pt
  \else
  #1\usebox{\boxifnotempty}#3%
  \fi%
}

\newcommand{\ifempty}[2]{\setbox0=\hbox{#1\unskip}%
  \ifdim\wd0=0pt%
  #2%
  \fi%
}

\newcommand{\ifnotempty}[2]{\setbox0=\hbox{#1\unskip}%
  \ifdim\wd0>0pt%
  #2%
  \fi%
}
\makeatother

\newcommand{\switchifempty}[3]{\sbox\boxifnotempty{#1}\setbox0=\hbox{\usebox{\boxifnotempty}\unskip}%
  \ifdim\wd0=0pt{}%
  #2%
  \else{}%
  #3%
  \usebox{\boxifnotempty}%
  \fi%
}

\makeatletter
\@ifundefined{chapter}{\usepackage{algorithm}}{\usepackage[chapter]{algorithm}}
\makeatother
\usepackage{algorithmicx}
\usepackage{algpseudocode}
\makeatother%

\usepackage{scrlfile}

\makeatletter
\newcommand*\newstoreddef[1]{
  \BeforeClosingMainAux{%
    \immediate\write\@auxout{%
      \string\restoredef{#1}{\csname #1\endcsname}%
    }%
  }%
}
\newcommand*{\restoredef}[2]{
  \expandafter\gdef\csname stored@#1\endcsname{#2}%
}
\newcommand*{\storeddef}[1]{
  \@ifundefined{stored@#1}{0}{\csname stored@#1\endcsname}%
}
\makeatother

\usepackage{pageslts}
\NewEnviron{tee}{\BODY\typeout{Marker Tee [start] ^^J \BODY ^^JMaker Tee [end]}}

\usepackage{cleveref}
\crefname{property}{property}{properties}
\Crefname{property}{Property}{Properties}
\crefname{assumption}{assumption}{assumptions}
\Crefname{assumption}{Assumption}{Assumptions}
\crefname{problem}{problem}{problems}
\Crefname{problem}{Problem}{Problems}
\crefname{fact}{fact}{facts}
\Crefname{fact}{Fact}{Facts}
\crefname{remark}{remark}{remarks}
\Crefname{remark}{Remark}{Remarks}
\usepackage{nameref}
\makeatletter
\@ifundefined{corollary}{}{
  \crefname{corollary}{corollary}{corollary}
  \Crefname{corollary}{Corollary}{Corollaries}
  \crefname{example}{example}{examples}
  \Crefname{example}{Example}{Examples}
  \crefname{remark}{remark}{remarks}
  \Crefname{remark}{Remark}{Remarks}
  \crefname{property}{property}{properties}
  \Crefname{property}{Property}{Properties}
  \crefname{assumption}{assumption}{assumptions}
  \Crefname{assumption}{Assumption}{Assumptions}
  \crefname{problem}{problem}{problems}
  \Crefname{problem}{Problem}{Problems}
  \crefname{fact}{fact}{facts}
  \Crefname{fact}{Fact}{Facts}
  \crefname{lemma}{lemma}{lemmas}
  \Crefname{lemma}{Lemma}{Lemmas}
  \crefname{proposition}{proposition}{propositions}
  \Crefname{proposition}{Proposition}{Propositions}
}

\usepackage{tikz}
\usetikzlibrary{calc}
\usetikzlibrary{matrix}
\usetikzlibrary{chains,scopes}
\usetikzlibrary{shapes.geometric}
\usetikzlibrary{arrows.meta}
\usetikzlibrary{decorations.markings}
\usetikzlibrary{decorations.pathreplacing}
\usetikzlibrary{backgrounds}

\tikzset{
  dim above/.style={to path={\pgfextra{
        \pgfinterruptpath
        \draw[>=latex,|->|] let
        \p1=($(\tikztostart)!1.5em!90:(\tikztotarget)$),
        \p2=($(\tikztotarget)!1.5em!-90:(\tikztostart)$)
        in(\p1) -- (\p2) node[pos=.5,sloped,above]{#1};
        \endpgfinterruptpath
      }
    }
  },
  dim double above/.style={to path={\pgfextra{
        \pgfinterruptpath
        \draw[>=latex,|->|] let
        \p1=($(\tikztostart)!3em!90:(\tikztotarget)$),
        \p2=($(\tikztotarget)!3em!-90:(\tikztostart)$)
        in(\p1) -- (\p2) node[pos=.5,sloped,above]{#1};
        \endpgfinterruptpath
      }
    }
  },
  dim below/.style={to path={\pgfextra{
        \pgfinterruptpath
        \draw[>=latex,|->|] let
        \p1=($(\tikztostart)!-1em!-90:(\tikztotarget)$),
        \p2=($(\tikztotarget)!-1em!90:(\tikztostart)$)
        in (\p1) -- (\p2) node[pos=.5,sloped,below]{#1};
        \endpgfinterruptpath
      }
    }
  },
}

\tikzset{
    right angle quadrant/.code={
        \pgfmathsetmacro\quadranta{{1,1,-1,-1}[#1-1]}     
        \pgfmathsetmacro\quadrantb{{1,-1,-1,1}[#1-1]}},
    right angle quadrant=1, 
    right angle length/.code={\def\rightanglelength{#1}},   
    right angle length=2ex, 
    right angle symbol/.style n args={3}{
        insert path={
            let \p0 = ($(#1)!(#3)!(#2)$) in     
                let \p1 = ($(\p0)!\quadranta*\rightanglelength!(#3)$), 
                \p2 = ($(\p0)!\quadrantb*\rightanglelength!(#2)$) in 
                let \p3 = ($(\p1)+(\p2)-(\p0)$) in  
            (\p1) -- (\p3) -- (\p2)
        }
    }
}

\newcommand{\pgfextractangle}[3]{%
    \pgfmathanglebetweenpoints{\pgfpointanchor{#2}{center}}
                              {\pgfpointanchor{#3}{center}}
    \global\let#1\pgfmathresult
}

\usetikzlibrary{shapes.arrows}
\newcommand{\commentarrow}[1][Azure4]{\tikz[baseline=-3pt]{\node[shape border uses incircle, fill=#1,rotate=180,single arrow, inner sep=1pt, minimum size=6pt, single arrow head extend=2pt]{};}}

\tikzset{ax/.style={-latex,line width=2pt}}

\tikzset{camera/.style={fill=Sienna1,fill opacity=0.5},%
image plane/.style={draw=RoyalBlue3,line width=2pt}}

\usepackage{algpseudocode}
\usepackage[T1]{fontenc}
\usepackage{graphicx}
\begin{document}
\title{Composable Model-Free RL for Navigation with Input-Affine Systems}
%
%
\author{Xinhuan Sang*\orcidID{0009-0002-0340-082X},
  Abdelrahman Abdelgawad\orcidID{0009-0005-1433-0072} \and
  Roberto Tron\orcidID{0000-0002-6676-8595}}
\authorrunning{X. Sang et al.}
%
\institute{Boston University, Boston MA 02215, USA \\
  \email{leosang@bu.edu},
  \email{aaoaa@bu.edu},
  \email{tron@bu.edu}}
\maketitle              
\begin{abstract}
  As autonomous robots are increasingly deployed in real-world environments, they must learn to navigate safely in real time. However, anticipating and learning all possible system behaviors is infeasible.
  To address this, we propose a composable, model-free Reinforcement Learning method that
  \begin{enumerate*}
  \item learns a value function and an optimal policy for each individual element in the system; and
  \item combines them to achieve collision avoidance and goal-directed behavior.
  \end{enumerate*}
  The lynchpin of our approach is that the system and environment are unknown and nonlinear, but are assumed to evolve in \emph{continuous time} and be \emph{input-affine} (many electromechanical systems of practical interest can be considered in this category). Based on these assumptions, we make three contributions.
  First, we derive the Hamilton-Jacobi-Bellman (HJB) equation for the \emph{value function} $V$, and show that the corresponding \emph{advantage function $A$ is a quadratic function} of the input actions $u$ and optimal policy $u^*$.
  Second, we propose a novel model-free actor-critic method to \emph{learn static or moving obstacles via optimal policies and value functions}, which uses gradient descent, and is based on the quadratic form of the advantage function mentioned above.
  Third, we show that our models for individual reach and avoid tasks can be \emph{composed online through Quadratically Constrained Quadratic Programs (QCQP)}, with formal guarantees on the avoidance of obstacles (modeled as level sets of the individual value functions), providing a model-free alternative to the popular model-based Control Lyapunov and Barrier Functions methods from the nonlinear control literature.
  We show the effectiveness of our method in simulations compared with a baseline obtained by applying Proximal Policy Optimization (PPO) to a discrete-time approximation of the system.
  \keywords{Collision Avoidance  \and Reinforcement Learning \and Modular Learning.}
\end{abstract}
\section{Introduction}

Robots are increasingly transitioning from controlled environments such as laboratories, factories, and warehouses into complex, dynamic conditions of daily life. However, it is infeasible to anticipate and train robots to handle every possible combination of goals and obstacle configurations that they may encounter in such settings. This makes it a pressing challenge in robotics to develop control methods that enable robots to operate safely and collision-free, while still achieving their goals in such environments.

Reinforcement Learning (RL) offers a promising data-driven approach to address this challenge, allowing an agent to learn value functions and optimal policies through direct interaction with the system~\cite{sutton1998reinforcement}.
In many cases, model-free RL has demonstrated the ability to learn complex behaviors that would be difficult to design manually without prior knowledge of system dynamics or environmental models~\cite{kober2013reinforcement}.
RL has been successfully applied to control problems with relatively simple dynamics but complex free configuration spaces (e.g., multi-UAV path planning when facing multiple threats~\cite{zhang2015geometric} and motion planning in an environment with multiple, possibly moving obstacles~\cite{park2007path}).
However, these successes have focused on discrete state-space formulations, and the learned policies typically apply only to specific task-environment configurations. When the environment or task changes significantly, retraining is usually required.

On the other hand, control theory, particularly linear and nonlinear control, provides provably safe feedback controllers with strong theoretical guarantees. However, these methods typically require accurate system models. An effective class of such controllers relies on Control Lyapunov Functions (CLFs) and Control Barrier Functions (CBFs), which measure progress toward goals and safety relative to obstacles, respectively. By solving a Quadratic Program (QP), it is possible to find control inputs that simultaneously ensure convergence and safety~\cite{ames2016control}. The CLF-CBF approach is robust under mild regularity assumptions (e.g., Lipschitz continuity) and naturally scales to complex environments by adding one linear constraint for each obstacle. Although conceptually similar to the traditional potential field methods~\cite{rimon1990exact}, CLF-CBF QPs offer broader applicability and superior empirical performance. Applications include locomotion control for bipedal robots~\cite{nguyen20163d} and integration with sampling-based path planning~\cite{yang2019sampling}. Nevertheless, these methods rely heavily on explicit expressions for system dynamics and the environment, limiting their practical deployment in uncertain situations, or where an analytical model is difficult to obtain from first principles.

This study introduces a new approach that combines the strengths of both paradigms by developing a model-free CLF-CBF-like control method for input-affine systems in continuous time and space.
A core concept in RL is the value function, which estimates the expected discounted cumulative reward or cost starting from a given state following optimal control~\cite{watkins1989learning}.
We link this notion to CLFs and CBFs in control theory and propose learning CLF-CBF-like actor-critic RL models. To support this, we derive a form of the Hamilton-Jacobi-Bellman (HJB) equation specific to input-affine systems, applicable to many electromechanical platforms of practical interest. Our derivation links the HJB residual to the advantage function and shows that it is quadratic in the optimal policy, providing a compact, model-free method to express the terms required in the CBF constraints for the online composition of policies.

As shown in later sections, any function-approximation framework that provides gradient predictions during training can be used with the proposed method. We employ Gaussian Processes (GPs)~\cite{Rasmussen2006Gaussian} as an interpolation method over sparse data to encode continuous, smooth value functions, and optimal policies across the state space. However, our method can also be applied to other learning frameworks (e.g., Neural Networks), provided that certain necessary conditions are met.
During training, we learn individual value functions (critics) and optimal policies (actors) for goals and obstacles separately. At runtime, these learned results are composed via a QP by incorporating them as linear constraints based on the current configuration of goals and obstacles, enabling real-time collision-free navigation.

To summarize, our approach has three main contributions:
\begin{itemize}
\item We derive a novel advantage function definition based on the HJB residual for input-affine systems. Form an actor-critic learning method based on its quadratic nature with input $u$ and optimal policy $u^*$.
\item From a control-theoretic perspective, our method eliminates the need for explicit system and environment models by employing a novel continuous-time model-free actor-critic learning method.
\item From the reinforcement learning side, generalization is enhanced by modularizing the learning process around composable environmental elements. It provides a formal collision-avoidance guarantee (based on the learned model) via CBF-like constraints for each obstacle.
\end{itemize}

\section{Preliminary Concepts}
\label{sec:pre_concepts}

This section provides an overview of the fundamental concepts in control theory and Reinforcement Learning on which our proposed approach is based.

\subsection{Input-Affine System}
\label{sec:Input-affine}

Input-affine systems represent a class of nonlinear systems where the dynamics are linear-affine in the control input~\cite{khalil2002nonlinear}:
\begin{equation}
  \label{eq:Input-Affine_System}
  \dot{x}=F(x,u)=f(x) + g(x)u,
\end{equation}
where $x \in X$ is the state of the system, $u \in U$ is the control input, and the mapping $F(x,u)$ that defines the dynamics is composed of the \emph{drift} vector field $f(x)$, and the \emph{control} vector field $g(x)$.
A key consequence of having a dynamics in the form of Eq.~\ref{eq:Input-Affine_System} is that the derivative (with respect to time) of any function $V$ evaluated along the trajectories of the system, that is, its \emph{Lie derivative}, will be linear-affine with respect to the control inputs $u(t)$, that is,
\begin{equation}
  \label{eq:Lie derivative}
  \dot{V}=\frac{d}{dt} V\bigl( x(t) \bigr)=\nabla_x V^{\intercal} \dot{x}=\nabla_x V^{\intercal} F(x,u)=\nabla_x V^{\intercal} f(x)+\nabla_x V^{\intercal} g(x)u.
\end{equation}
While all the forms in Eq.~\ref{eq:Lie derivative} are mathematically equivalent, in the following we will choose different forms depending on what is most useful in the context.

\subsection{CBF Constraint}
\label{subsec:cbf_constraint}
Given a CBF $H(x):\mathcal{D}\to\mathbb{R}$, define the associated safe set
\begin{equation}
\mathcal{C} \defeq \{x\in\mathcal{D}\mid H(x)\ge 0\}.
\end{equation}
We say that $\cC$ is \emph{forward invariant} for the (closed-loop) dynamics if any trajectory
starting in $\cC$ remains in $\cC$ for all future times (i.e., $x(0)\in\cC
\Rightarrow x(t)\in\cC$ for all $t\ge 0$, over the interval of existence of the solution).
Showing that $\cC$ is forward invariant is equivalent to showing that $H(x(0))\ge 0 \Rightarrow H(x(t))\ge 0$ for all $t\ge 0$; a sufficient condition to achieve this is given by the \emph{CBF constraint}~\cite{ames2016control}
\begin{equation}
\label{eq:CBF constaint}
  \dot{H}+c_h H=\nabla_xH^\intercal\dot{x}+c_h H \geq 0.
\end{equation}
The constant $c_h\geq 0$ controls how conservative the constraint is; for example, $c_h=0$ corresponds to the most conservative case where $H$ is never allowed to decrease. 

\subsection{Advantage Function}
In the field of discrete time RL~\cite{baird1993advantage,sutton1999policy}, the \emph{value function} $V(x)$ represents the total expected reward/cost when starting at state $x$ and following optimal control $u^*(x)$. The \emph{state-action value function} $Q(x,u)$ represents the total expected reward/cost when starting in state $x$ and taking action $u$ and then followed by optimal control $u^*$. The \emph{advantage function} $A(x,u)$ represents the difference
\begin{equation}
\label{eq:advantage_function_def}
    A(x,u) = Q(x,u)-V(x).
\end{equation}
 By definition, we have:
\begin{equation}\label{eq:min A}
  A(x,u^*)=\min_u A(x,u)=\min_u Q(x,u) - V(x)=Q(x,u^*)-V(x)=0.
\end{equation}
\section{Theoretical Derivation}
\subsection{Hamilton–Jacobi–Bellman Equation}
\label{sec:HJB}
The Hamilton–Jacobi–Bellman (HJB) equation is a fundamental result in optimal control theory that provides a necessary condition for optimality in continuous-time, continuous-state decision-making problems~\citep{kirk2004optimal}, formulated as a nonlinear partial differential equation. This section provides an HJB derivation for a time-discounted setting under state-dependent termination conditions.

We consider the trajectories of the system $x(t)$ (episodes) uniquely determined by an input signal $u(t)$ and an initial condition $x(t_0)=x_0$, terminating at time $T(x_0,u)$ according to user-defined state conditions (e.g., hitting an obstacle, reaching the goal, or reaching an out-of-bound region).
Consider a discount factor $\lambda$ and no terminal cost. The value function $V$ is then defined as
\begin{equation}
\label{eq:HJB_discounted}
V(x_0,t_0)=\min_{u\in \cU}\int_{t_0}^{T(x_0,u)} e^{-\lambda (t-t_0)}C(x(t),u(t))dt
\end{equation}
where $\cU$ is the set of right-continuous control signals defined for $t\geq t_0$, $x(t)$ is the solution to Eq.~\ref{eq:Input-Affine_System} uniquely determined by $u$ and $x_0$, $C(x,u)$ is a user-defined infinitesimal cost function.\footnote{The definition and derivations are valid with a terminal cost after minor modifications, but this is not used in this paper.}
Starting from a later state $x_h\dot{=} x(t_0+h)$, the HJB equation is similar:
\begin{equation}
\label{eq:HJB_h}
V(x_h,t_0+h)=\min_{u\in \cU}\int_{t_0+h}^{T(x_0,u)} e^{-\lambda (t-(t_0+h))}C(x(t),u(t))dt
\end{equation}
Breaking the integral, and substituting the second term with Eq.~\ref{eq:HJB_h} we obtain:
\begin{equation}
    V(x_0,t_0) =\min_{u\in \cU}\biggl\{\int_{t_0}^{t_0+h} e^{-\lambda (t-t_0)}C(x(t),u(t)) dt+ e^{-\lambda h} V(x_h,t_0+h)\biggr\}
\end{equation}

Subtracting $V(x_0,t_0)=e^{\lambda 0} V(x_0,t_0)$ on both sides we get:
\begin{equation}
    0=\min_{u\in \cU}\biggl\{\int_{t_0}^{t_0+h} e^{-\lambda (t-t_0)}C(x(t),u(t))dt+ e^{-\lambda h} V(x_h,t_0+h)-e^{\lambda 0} V(x_0,t_0)\biggr\}.
\label{eq:HJB_subtract_V}
\end{equation}

Assuming time-invariant dynamics $\dot{x}=F(x,u)$, dividing both sides by $h$, adding and removing $e^{-\lambda h}V(x_0,t_0)$, and taking the limit $h\rightarrow0$, derivatives appear as follows:
{\small
\begin{align}
\label{eq:HJB}
    0=&\lim_{h\rightarrow0}\frac{1}{h}\min_{u\in\cU}\biggl\{\int_{t_0}^{t_0+h}e^{-\lambda (t-t_0)}C(x(t),u(t))dt+ 
     e^{-\lambda h} V(x_h,t_0+h)-e^{\lambda 0} V(x_0,t_0)\biggr\}\nonumber\\
    =&\lim_{h\rightarrow0}\frac{1}{h}\min_{u\in\cU}\biggl\{\int_{t_0}^{t_0+h}e^{-\lambda (t-t_0)}C(x(t),u(t))dt+\nonumber \\
    &\qquad \qquad  \quad e^{-\lambda h} \bigl( V(x_h,t_0+h)-V(x_0,t_0)\bigr)+\bigl(e^{-\lambda h}-e^{\lambda 0}\bigr) V(x_0,t_0)\biggr\}\nonumber\\
    =&\min_{u\in\cU}\biggl\{e^{-\lambda (0)}C(x_0,u_0) + e^{-\lambda 0}\left.\frac{d}{dt} V(x,t) \right|_{t=t_0}+ \left.\frac{d}{dt} e^{-\lambda t}\right|_{t=0} V(x_0,t_0)\biggr\}
\end{align}}
With time-invariant dynamics, the result of Eq.~\ref{eq:HJB_discounted} depends only on the starting state $x_0$ and not on the initial time $t_0$. Thus, we can substitute the term $\frac{d}{dt} V(x,t)|_{t=t_0}$ with Eq.~\ref{eq:Lie derivative} and drop the dependency on a specific time $t_0$:
\begin{equation}\label{eq:HJB general}
    0=\min_{u}\biggl\{C(x,u)-\lambda V(x)+ \nabla_{x} V(x)^{\intercal} F(x,u) \biggr\}.
\end{equation}
\subsection{Special Form of the Advantage Function}
By analogy between the HJB and the corresponding Bellman equation in discrete time~\cite{baird1993advantage,sutton1999policy} and between Eq.~\ref{eq:HJB general} and Eq.~\ref{eq:min A}
we define the \emph{differential advantage function} $A(x,u)$ as the argument of the $\min$ operator of Eq.~\ref{eq:HJB general}:
\begin{equation}
\label{eq:advantage_function}
    A(x,u) = C(x,u)-\lambda V(x)+ \nabla_{x} V(x)^{\intercal} F(x,u).
\end{equation}
With this definition, the HJB equation becomes a special condition on the advantage function and for the discrete time case in Eq.~\ref{eq:min A}, for the optimal control $u^*$, we have
\begin{equation}\label{eq:advantage_min}
    A(x,u^*)=\min_{u}A(x,u)=0.
\end{equation}

Furthermore, under practical conditions of the unknown dynamics $F$ and user-defined cost $C$, the advantage $A(x,u)$ takes a form that facilitates learning. 
\begin{proposition}
  Assume an unknown input-affine dynamics of form Eq.~\ref{eq:Input-Affine_System}, and a quadratic cost function $C(x,u)$ of form
\begin{equation}
\label{eq:cost_function}
    C(x,u) = \frac{1}{2}(u-u_c(x))^\intercal R(x)(u-u_c(x))+q_c(x),
\end{equation}
where $R(x)$ is positive semidefinite, and $u_c(x)$ is a vector-valued function.
Then the differential advantage function $A(x,u)$ is quadratic in the optimal control $u^*(x)$, specifically:
\begin{equation}
    A(x,u) = \frac{1}{2}\bigl(u-u^*(x)\bigr)^\intercal R(x)\bigl(u-u^*(x)\bigr).
    \label{eq:quadratic_advantage_function}
\end{equation}
\end{proposition}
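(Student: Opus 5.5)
Substituting the input-affine dynamics of Eq.~\ref{eq:Input-Affine_System} and the quadratic cost of Eq.~\ref{eq:cost_function} into the definition of the differential advantage function in Eq.~\ref{eq:advantage_function} gives
\begin{equation*}
A(x,u)=\tfrac{1}{2}(u-u_c)^\intercal R(u-u_c)+q_c-\lambda V+\nabla_x V^\intercal f+\nabla_x V^\intercal g\,u .
\end{equation*}
For fixed $x$ this is a quadratic polynomial in $u$: the quadratic part is $\tfrac12 u^\intercal R u$, the linear part is $(-Ru_c+g^\intercal\nabla_x V)^\intercal u$, and the remaining terms do not depend on $u$. The idea is to complete the square around the minimizer and then use the HJB condition Eq.~\ref{eq:advantage_min} to fix the constant.

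\textbf{Step 1 (first-order condition).} Because $R(x)\succeq 0$, the map $u\mapsto A(x,u)$ is convex. Its minimizer $u^*(x)$, whose existence is presupposed by Eq.~\ref{eq:advantage_min}, is therefore characterized by a vanishing gradient:
\begin{equation*}
\nabla_u A(x,u^*)=R(u^*-u_c)+g^\intercal\nabla_x V=0 .
\end{equation*}
This gives the identity $g^\intercal\nabla_x V=-R(u^*-u_c)$. It links the unknown quantity $g^\intercal\nabla_xV$ to the policy without requiring $R$ to be invertible.

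\textbf{Step 2 (exact second-order expansion).} Since $A$ is exactly quadratic in $u$, its Taylor expansion about $u^*$ terminates:
\begin{equation*}
A(x,u)=A(x,u^*)+\nabla_uA(x,u^*)^\intercal(u-u^*)+\tfrac12(u-u^*)^\intercal R(u-u^*).
\end{equation*}
One can check this directly by expanding $\tfrac12(u-u_c)^\intercal R(u-u_c)$ as $\tfrac12\bigl((u-u^*)+(u^*-u_c)\bigr)^\intercal R\bigl((u-u^*)+(u^*-u_c)\bigr)$. The cross term $(u-u^*)^\intercal R(u^*-u_c)$ then cancels against $\nabla_xV^\intercal g(u-u^*)$ by the identity from Step 1.

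\textbf{Step 3 (conclusion).} By the HJB equation in the form Eq.~\ref{eq:advantage_min}, $A(x,u^*)=0$, and by Step 1 the linear term vanishes. What remains is Eq.~\ref{eq:quadratic_advantage_function}.

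\textbf{Main obstacle.} The delicate point is the semidefinite case. If $R$ is singular, the minimizer may fail to exist unless $g^\intercal\nabla_xV$ lies in the range of $R$, and when it exists it may not be unique. I would handle this by assuming, as Eq.~\ref{eq:advantage_min} implicitly does, that the minimum is attained. The argument above then goes through for any minimizer $u^*$, because it uses only the stationarity condition and never $R^{-1}$. When $R\succ0$, one can additionally write the explicit policy $u^*=u_c-R^{-1}g^\intercal\nabla_xV$. I would also note that $V$ is assumed differentiable, as in the derivation of Eq.~\ref{eq:HJB general}.
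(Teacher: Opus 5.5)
Your proof is correct and follows essentially the same route as the paper's. Both arguments expand the exactly quadratic advantage about the minimizer $u^*(x)$, remove the constant using $A(x,u^*)=0$ from the HJB condition, remove the linear term by optimality of $u^*$, and read off the Hessian $R(x)$ from the cost. Your version is more explicit than the paper's coefficient matching: you write out the stationarity condition $R(u^*-u_c)+g^\intercal\nabla_x V=0$ (which avoids $R^{-1}$), and you flag that attainment of an unconstrained minimum must be assumed when $R(x)$ is only semidefinite.
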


\begin{proof}
  With the assumptions above and using $f(x)$ and $g(x)$ as placeholders for the unknown dynamics, Eq.~\ref{eq:advantage_function} becomes:
  \begin{equation}\label{eq:advantage_function_input-affine}
    A(x,u)= C(x,u)-\lambda V(x)+ \nabla_x V^{\intercal} f(x)+\nabla_x V^{\intercal} g(x)u.
  \end{equation}

  Because the terms on the right-hand side are either quadratic in $u$ (i.e., $C(x,u)$) or linear in $u$ (i.e., $\nabla_x V^{\intercal} g(x)u$), the advantage function $A(x,u)$ must also be quadratic in $u$, i.e., generically:
  \begin{equation}
    A(x,u) = (u-u_a(x))^\intercal S(x)(u-u_a(x)) +D(x)(u-u_a(x))+E(x).
  \end{equation}
  From Eq.~\ref{eq:advantage_min}, the minimum of $A(x,u)$ is zero and the minimum is achieved at $u=u^*(x)$, hence $E(x),D(x)=0$. Moreover, matching only the quadratic term in Eq.~\ref{eq:advantage_function_input-affine}, which is $C(x,u)$, yields $S(x)=\frac{1}{2}R(x)$. The claim follows.
\end{proof}


\section{Actor-Critic Learning for Single Goals and Obstacles}
\label{sec:actor-critic_learning}
Based on our special form of the advantage function for input-affine systems, we construct our proposed actor-critic algorithm that we will use to model individual goals or obstacles (before composing them through the QCQP). Following the definition of actor-critic algorithms~\cite{sutton1999policy}, we intuitively have an actor represented by the optimal policy $u^*(x)$, and a critic represented by the value function $V(x)$~\cite{konda1999actor}.
For simplicity of exposition, we use estimates of the value function $\hat{V}(x)$, of the gradient of the value function $\nabla_x \hat{V}(x)$, and of the optimal policy $\hat{u}^*(x)$ that are linear in the parameters:
\begin{subequations}
\begin{align}
    \hat{V}(x) &= \mathcal{M}_V(x)\theta_{V},\\
    \nabla_x \hat{V}(x) &= \mathcal{M}_V'(x)\theta_{V},\\
    \hat{u}^*(x) &= \mathcal{M}_u(x)\theta_{u^*},
\end{align}
\label{eq:general_model}
\end{subequations}
where $\theta_{V}$ and $\theta_{u^*}$ are the parameters and data used by the value function model $\mathcal{M}_{V}$ and optimal policy model $\mathcal{M}_{u}$, respectively. Note that the model for obtaining the gradient of the value function, $\mathcal{M}'_{V}$, is based on the same parameters and data as in the model $\mathcal{M}_{V}$.

The parameters $R(x)$, $u_c(x)$, and $q_c(x)$ in the cost function are part of the design of the learning algorithm (i.e., determining how the instantaneous cost is determined). In this case, we chose $R(x)=I$, $q_c(x)=0$, and $u_c(x)=0$. Then we have that the cost function Eq.~\ref{eq:cost_function} simplifies, and we define two different models for the advantage function, one from Eq.~\ref{eq:advantage_function} based on critic $\hat{V}(x)$, and one from Eq.~\ref{eq:quadratic_advantage_function} based on actor $\hat{u}^*(x)$:
\begin{subequations}
\begin{align}
    C(x_i,u)&=\frac{1}{2}u^\intercal u,\\
    \hat{A}_{critic}(x_i,u)&=\frac{1}{2}u^\intercal u +\nabla_x \hat{V}(x_i)^\intercal \dot{x}-\lambda \hat{V}(x_i).\label{eq:A critic}\\
    \hat{A}_{actor}(x_i,u)&=\frac{1}{2} \norm{u-\hat{u}^*(x_i)}^2,
    \end{align}
\end{subequations}
We define loss as the difference between the \emph{critic} and \emph{actor} advantage functions:
\begin{equation}\label{eq:loss}
    Loss=\frac{1}{2}(\hat{A}_{actor}(x,u)-\hat{A}_{critic}(x,u))^2.
\end{equation}

We update $\theta_V$ and $\theta_{u^*}$ using gradient descent:
\begin{subequations}
\begin{align}
    \frac{\partial Loss}{\partial \theta_V}=(\hat{A}_{actor}&-\hat{A}_{critic})(\lambda \mathcal{M}_V(x_i)-\mathcal{M}'_V(x_i)\dot{x}),
    \label{eq:theta_V_gradient}\\
    \frac{\partial Loss}{\partial \theta_{u^*}}=-(\hat{A}_{actor}&-\hat{A}_{critic})\bigl(u-\hat{u}^*(x)\bigr)\mathcal{M}_u(x_i)\label{eq:theta_u_gradient},\\
    \theta_V &\leftarrow \theta_V -\eta \frac{\partial Loss}{\partial \theta_V}\label{eq:value_update_general},\\
    \theta_{u^*} &\leftarrow \theta_{u^*} -\eta \frac{\partial Loss}{\partial \theta_{u^*}}.
\end{align}
\label{eq:general_gradient_descent}
\end{subequations}

\begin{remark}\label{remark:no model}
  While we used the model $f(x),g(x)$ for the derivation, we do not need to know it for computing the gradient updates in Eq.~\ref{eq:general_gradient_descent}. Similarly to how the discrete case assumes that we observe pairs of states $x[k],x[k+1]$, in our continuous-time case we assume that we observe pairs $x(t),\dot{x}(t)$.
\end{remark}
\section{Modular Training of Single Goals and Obstacles}
A central design choice in our framework is to treat a navigation scene not as a single monolithic RL task but as a \emph{composition} of reusable behaviors associated with individual \emph{environment elements}, namely goals and obstacles. This mirrors the way CLF-CBF controllers scale to complex scenes by introducing one constraint per obstacle and combining them online in a QCQP, but without requiring explicit models of the system dynamics or obstacle geometry.
\subsection{Element-wise Training in Local Coordinate}
\label{sec:element-wise_training}
Let $\mathcal{E}$ denote the set of elements present in a scene: one goal element $e_g$ and $M$ obstacle elements $\{e_i\}_{i=1}^M$. For each element $e_j\in\mathcal{E}$, we define a state $x_{j}$, which includes the agent state (e.g., pose, velocity, and other available parameters relative to the element) and the element's parameters (e.g., shape descriptors when available).
The state $x_{j} \in X_{j}$ is only relative to the goal/obstacle element, so the learned functions are not affected across global layouts by construction.

For \emph{each} element $e_j$ we train critic $V_j(x_{j})$ and actor $u_{j}^*(x_{j})$ using the continuous-time HJB residual from Eq.~\ref{eq:advantage_function_input-affine}. The termination set $\mathcal{X}_{{j},term}\subset X_{j}$ is the contact/reach set; we enforce $V_{j}(x)=0$ for $x\in \mathcal{X}_{{j},term}$. Importantly, \emph{goals and obstacles are trained in the same manner}: we always define a \emph{reaching} task whose termination set corresponds to entering the element. For the goal, termination occurs upon reaching the goal set. For an obstacle, termination occurs upon \emph{contact} with the obstacle. Under this convention, $u_{j}^*(x_{j})$ points in the locally optimal \emph{approach-to-contact} direction, and $V_{j}(x_{j})$ measures the discounted cost-to-contact. Therefore, the distinction between the \emph{goal} and \emph{obstacle} arises \emph{only at deployment time} through how these learned functions are incorporated as linear constraints in the online QCQP.

\subsection{Gaussian Process Implementation}
\label{sec:gp_implementation}
We adopted Gaussian Processes (GPs)~\cite{Rasmussen2006Gaussian} as the function-approximation backbone to validate our theoretical development for three main reasons: \begin{enumerate*}
    \item GP regression provides a modeling framework in which predictions are expressed directly in terms of the data through kernel evaluations, making it straightforward to interpret how the dataset shapes the learned function.
    \item GP posteriors define a distribution over functions on a continuous input domain. With commonly used kernels, the posterior mean is continuous (and, depending on kernel regularity, sample paths can be smooth), which aligns naturally with our continuous-state setting.
    \item When the kernel is differentiable, derivatives of a GP are themselves GPs, and function values and derivative values are jointly Gaussian; therefore, gradients can be obtained in a principled and convenient manner from the same GP model.
\end{enumerate*}


In practice, a discrete subset of the state space $X$, marked as $X^\sharp\subset X$, is the state where we store the mean of the optimal policy $\mu_{u^*}$ and the value function $\mu_V$. For state $x_i\in X$, we can obtain a prediction of the optimal policy $u_p^*(x_i)$ and the value function $\hat{V}(x_i)$ as Gaussian Processes:
\begin{subequations}
\begin{align}
    J_{x_i} =& K_{x_i,X^\sharp}K_{X^\sharp,X^\sharp}^{-1},\\
    \hat{u^*}(x_i) =& J_{x_i} \mu_{u^*}\label{eq:predict_u},\\
    \hat{V}(x_i) =& J_{x_i} \mu_{V}\label{eq:predict_V},
\end{align}
\end{subequations}
where $K_{x_i, X^\sharp}$ is the covariance between the prediction state $x_i$ and the data location $X^\sharp$, and $K_{X^\sharp,X^\sharp}$ is the covariance matrix for all possible combinations within the data location $X^\sharp$. The elements of the covariance matrix are defined by the kernel function $k$; in our case, we use the RBF kernel $k(x_i,x_j)$,
\begin{equation}
    K_{i,j}=k(x_i,x_j)\quad x_i,x_j\in X^\sharp
\end{equation}
Furthermore, with a differentiable kernel, function values and partial derivatives are jointly Gaussian. Therefore, we obtain gradients directly from the same GP posterior (we use the posterior mean in our implementation) to compute the Lie derivative of the value function~\cite{Rasmussen2006Gaussian}:
\begin{subequations}
\begin{align}
    J'_{x_i} &=  K'_{x_i,X^\sharp} K_{X^\sharp,X^\sharp}^{-1},\\
    K'_{x_i,X^\sharp}&=\bmat{\frac{\partial}{\partial [x]_1}K(x_i,X^\sharp) & \cdots & \frac{\partial}{\partial [x]_m}K(x_{i},X^\sharp)},\\
    \nabla_x\hat{V}(x_i) &= J'_{x_i} \mu_{V},\label{eq:predict_delta_V}
\end{align}
\end{subequations}
where $\frac{\partial}{\partial [x]_j}$ is the partial derivative of the $j^{\text{th}}$ dimension of state $x\in \mathbb{R}^m$. For $K$, this partial derivative is applied element-wise.

Substituting $\mathcal{M_V}=J_{x_i}$, $\mathcal{M}'_V=J'_{x_i}$, and $\mathcal{M}_u=J_{x_i}$ into Eq.~\ref{eq:general_gradient_descent} yields gradient-descent updates for the stored base-point means $(\mu_V,\mu_{u^*})$.


\subsection{Discrete-time Baseline using Proximal Policy Optimization (PPO)}
To further evaluate our system, we compared it with Proximal Policy Optimization (PPO) \cite{schulman2017proximal}. PPO is an on-policy actor--critic algorithm that alternates between \begin{itemize}
    \item collecting rollouts under the current stochastic policy $u_{\theta}(x)$ and
    \item updating $\theta$ by maximizing a clipped surrogate objective while fitting a critic $V_\phi(x)$ to predict the discounted return \cite{schulman2017proximal}.
\end{itemize}

We use PPO to train a critic and an actor neural network. We use the Stable-Baselines3 implementation of PPO \cite{raffin2021stable}, which performs multiple epochs of minibatch
updates per rollout buffer, including common hyperparameters such as the rollout horizon $n_\text{steps}$, GAE $\lambda_{GAE}$, and clipping range \cite{raffin2021stable}.
The PPO critic $V_\phi(x)$ estimates the expected discounted return $\mathbb{E}[\sum_{k\ge 0}\gamma^k r_k]$. Under our choice $r_k=-\Delta t\,C(x_k,u_k)$ (plus terminal shaping), $V_\phi$ represents a negative
cost-to-go (up to discretization and shaping).  The optimal control action $u^*$ is obtained as the mean action of the policy $\pi$. 

\subsection{Simulation Results for Single Static and Moving Elements}

\paragraph{Simulation setup.}
Here, we show the results of learning individual elements. 
To emphasize that our approach does not require an explicit model of the element's shape or dynamics, we consider navigation tasks in $\real{2}$ in which the learner observes only a \emph{relative position} state, while key dynamical effects remain unobserved. Each environmental element $e_j$ (a goal region or obstacle) induces a local navigation objective. 
To make these behaviors interpretable, after training, we evaluated each actor-critic pair on a dense grid around the element and visualized (i) the learned value $\hat{V}_j(x_j)$ as a contour map and (ii) the learned control $\hat{u}_j^*(x_j)$ as a vector field in the coordinates of the element.

\begin{figure}[htbp]
  \centering
  \begin{subfigure}[b]{0.35\textwidth}
    \centering
    \includegraphics[width=\linewidth]{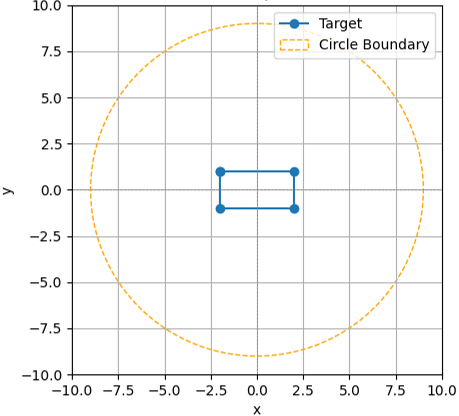}
    \caption{Range and element}
  \end{subfigure}
  \begin{subfigure}[b]{0.35\textwidth}
    \centering
    \includegraphics[width=\linewidth]{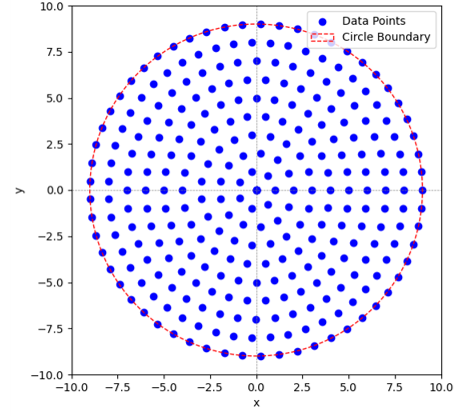}
    \caption{Model data points}
  \end{subfigure}
  \caption{Training range setup for a typical 2-dimensional navigation scenario: (a) shows a circular state space (orange dashed circle) around the element (blue rectangle) and (b) shows the $X^{\sharp}$ used to store $\mu_V$ and $\mu_{u^*}$ as blue dots.}
  \label{fig:Training_Env}
\end{figure}

\begin{figure}[htbp]
  \centering
  \begin{subfigure}[b]{0.35\textwidth}
    \centering
    \includegraphics[width=\linewidth]{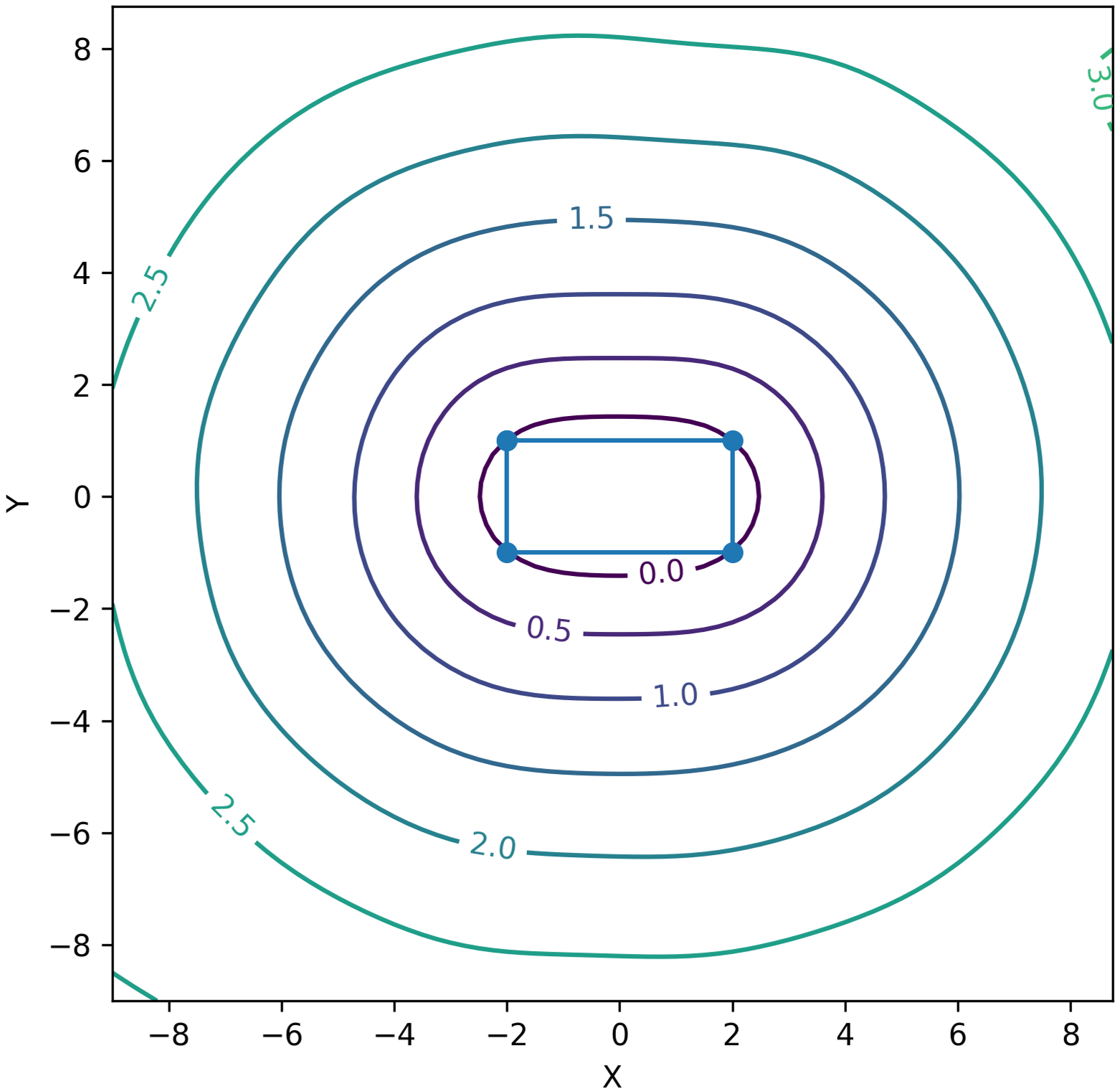}
    \caption{$V(x)$}
  \end{subfigure}
  \begin{subfigure}[b]{0.35\textwidth}
    \centering
    \includegraphics[width=\linewidth]{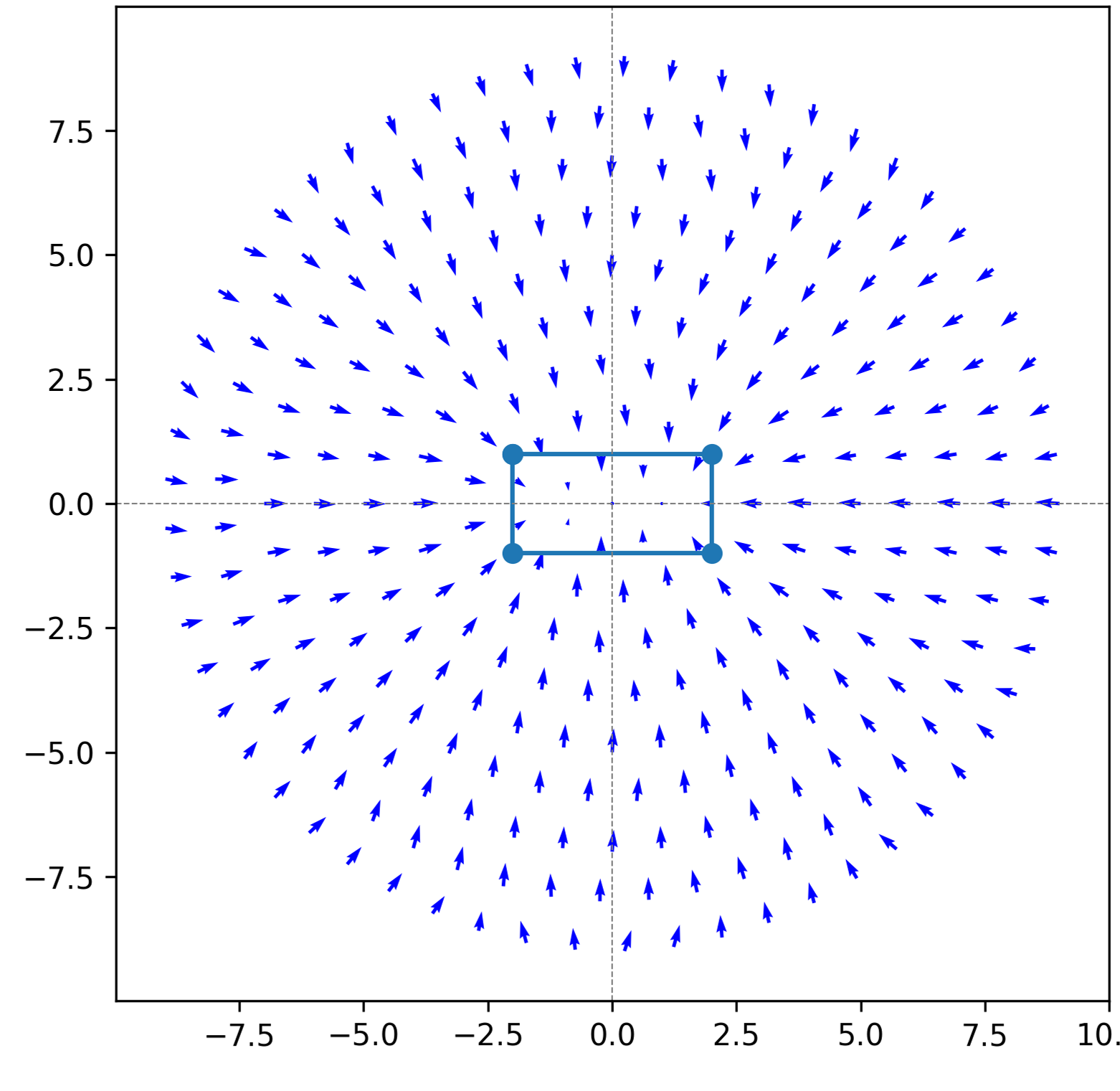}
    \caption{$u^*(x)$}
  \end{subfigure}
  \caption{GP training results for $4\times 2$ stationary rectangle element
  with $20000$ epochs and maximum $100$ steps for each epoch: (a) shows the contours of the learned value function and (b) shows the learned control vector field.}
  \label{fig:Basic_Result}
\end{figure}

\begin{figure}[htbp]
  \centering
  \begin{subfigure}[b]{0.35\textwidth}
    \centering
    \includegraphics[width=\linewidth]{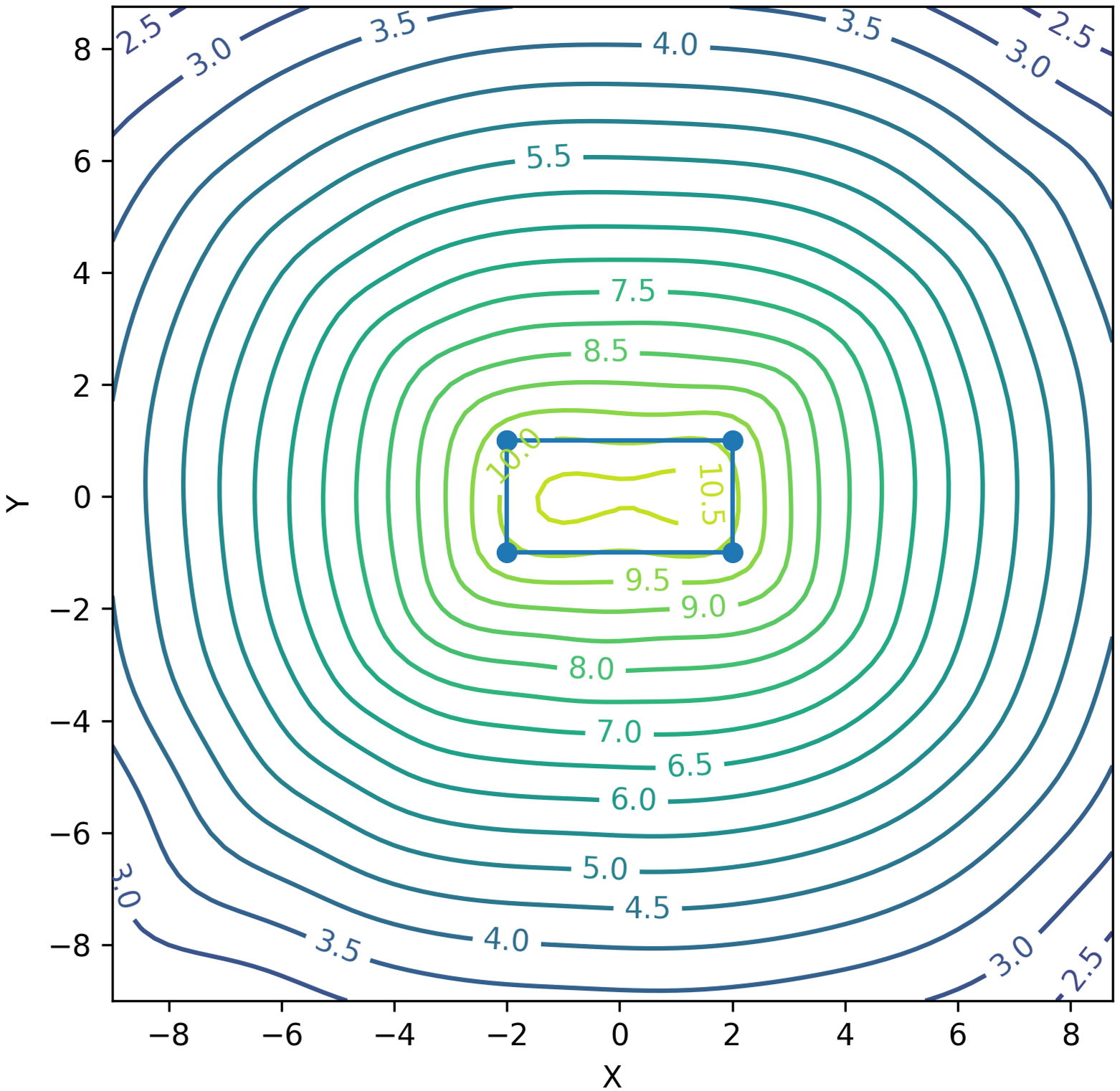}
    \caption{$V_\phi(x)$}
  \end{subfigure}
  \begin{subfigure}[b]{0.35\textwidth}
    \centering
    \includegraphics[width=\linewidth]{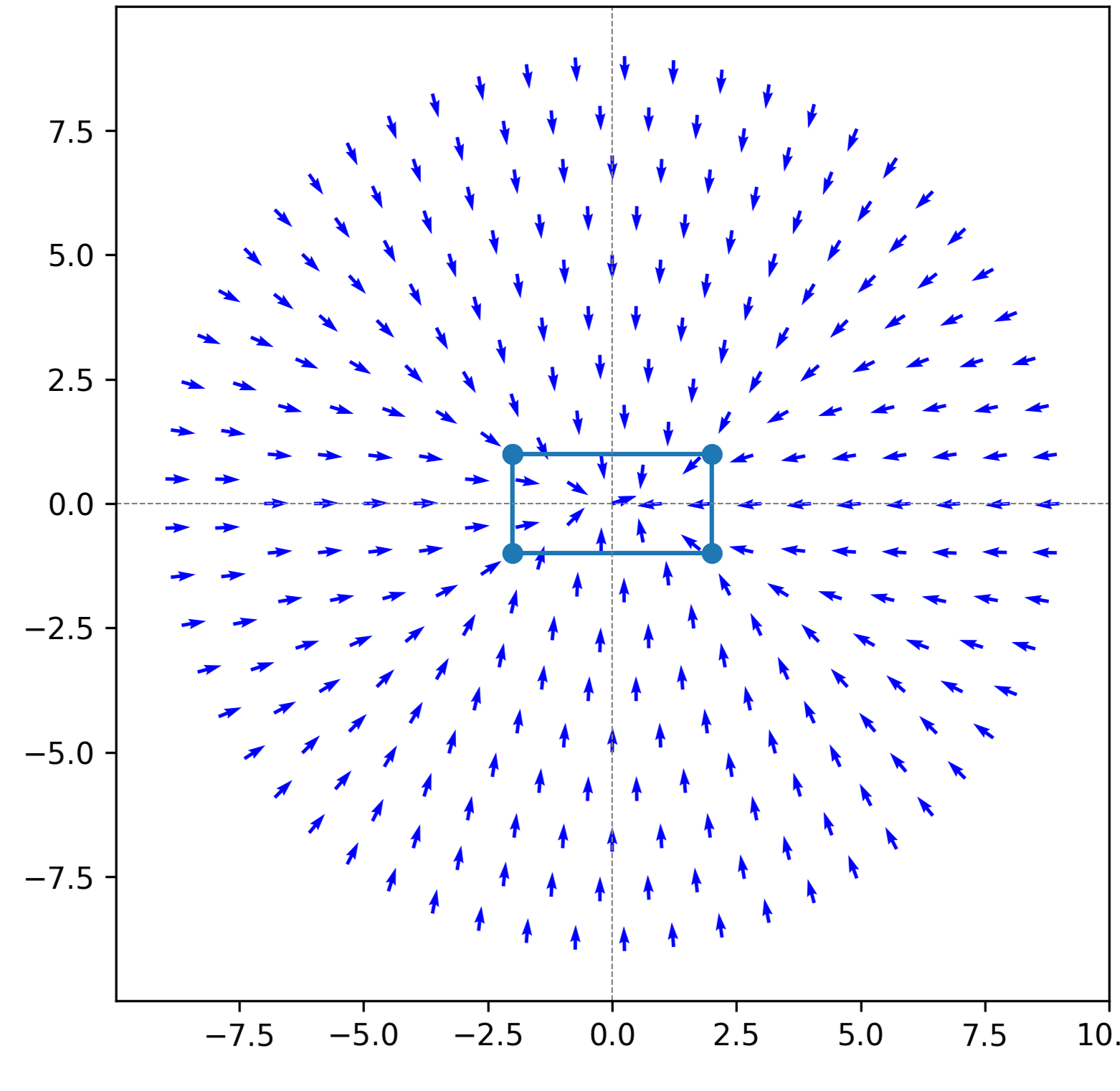}
    \caption{$u_{\theta}(x)$}
  \end{subfigure}
  \caption{PPO training results for $4\times 2$ stationary rectangle element with $20000$ epochs and maximum $100$ steps for each epoch: (a) shows the contours of the learned value function and (b) shows the mean policy control vector field.}
  \label{fig:Basic_Result_PPO}
\end{figure}

\paragraph{Stationary elements.}
Fig~\ref{fig:Training_Env} and~\ref{fig:Basic_Result} show the basic training environment setup and GP results for a 2-dimensional rectangle element. Fig.~\ref{fig:Basic_Result_PPO} shows the baseline PPO training results for the same rectangle as the previous GP results.
The proposed approach can also be easily trained on random polygons, as shown in Fig.~\ref{fig:random_target}.
In Fig.~\ref{fig:Basic_Result} and~\ref{fig:Basic_Result_PPO}, both GP and PPO learn qualitatively similar structures in the value function and control vector field. The primary distinction lies in the sign convention: our method minimizes cost (so $V$ decreases toward contact), whereas PPO maximizes reward (so $V_\phi$ increases toward the obstacle). This agreement confirms that our continuous-time HJB formulation captures the same fundamental structure as standard discrete-time RL, with the PPO critic differing only in sign and scaling.

\begin{figure}[htbp]
  \centering
  \begin{subfigure}[b]{0.35\textwidth}
    \centering
    \includegraphics[width=\linewidth]{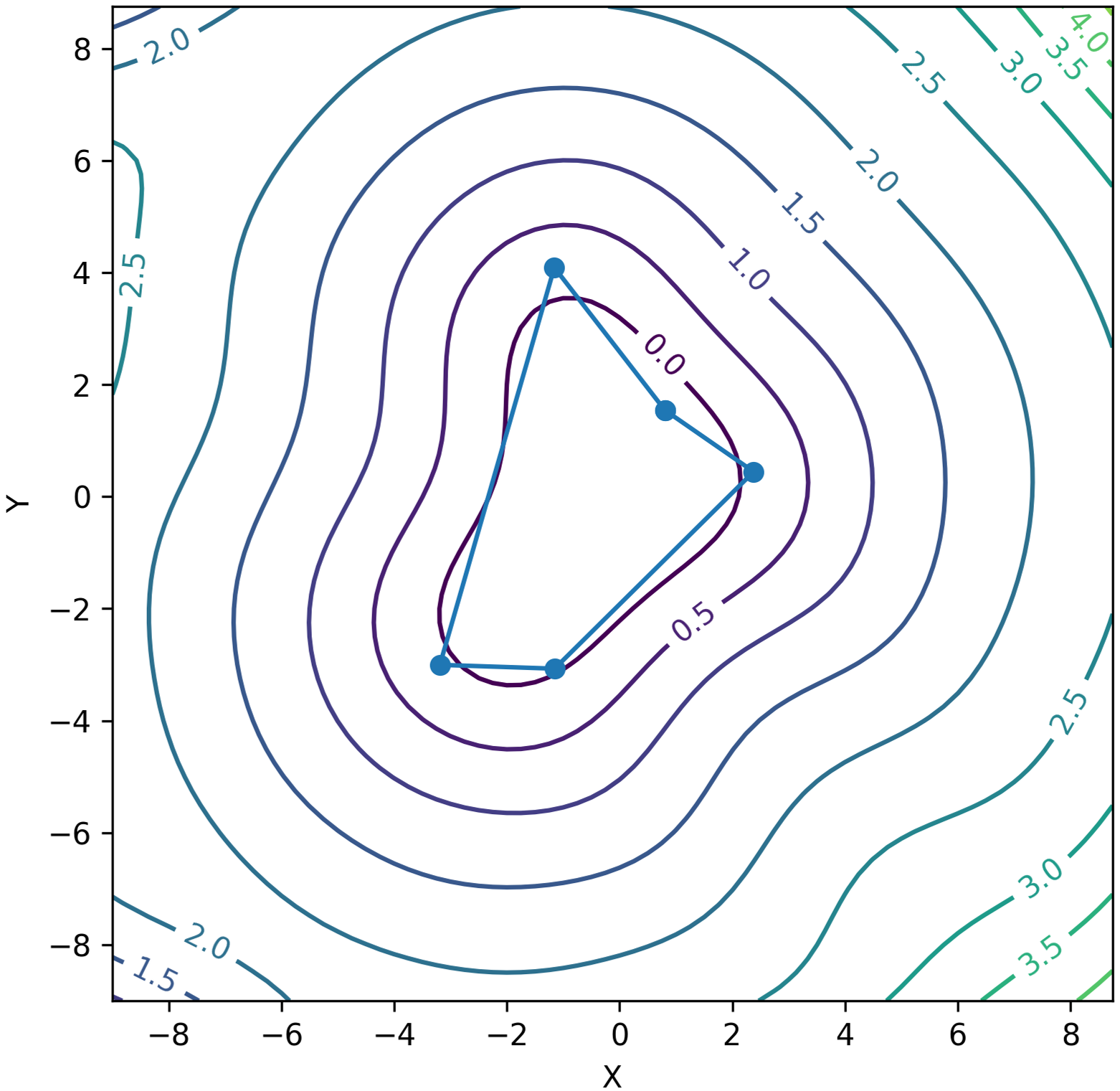}
    \caption{$V(x)$}
  \end{subfigure}
  \begin{subfigure}[b]{0.35\textwidth}
    \centering
    \includegraphics[width=\linewidth]{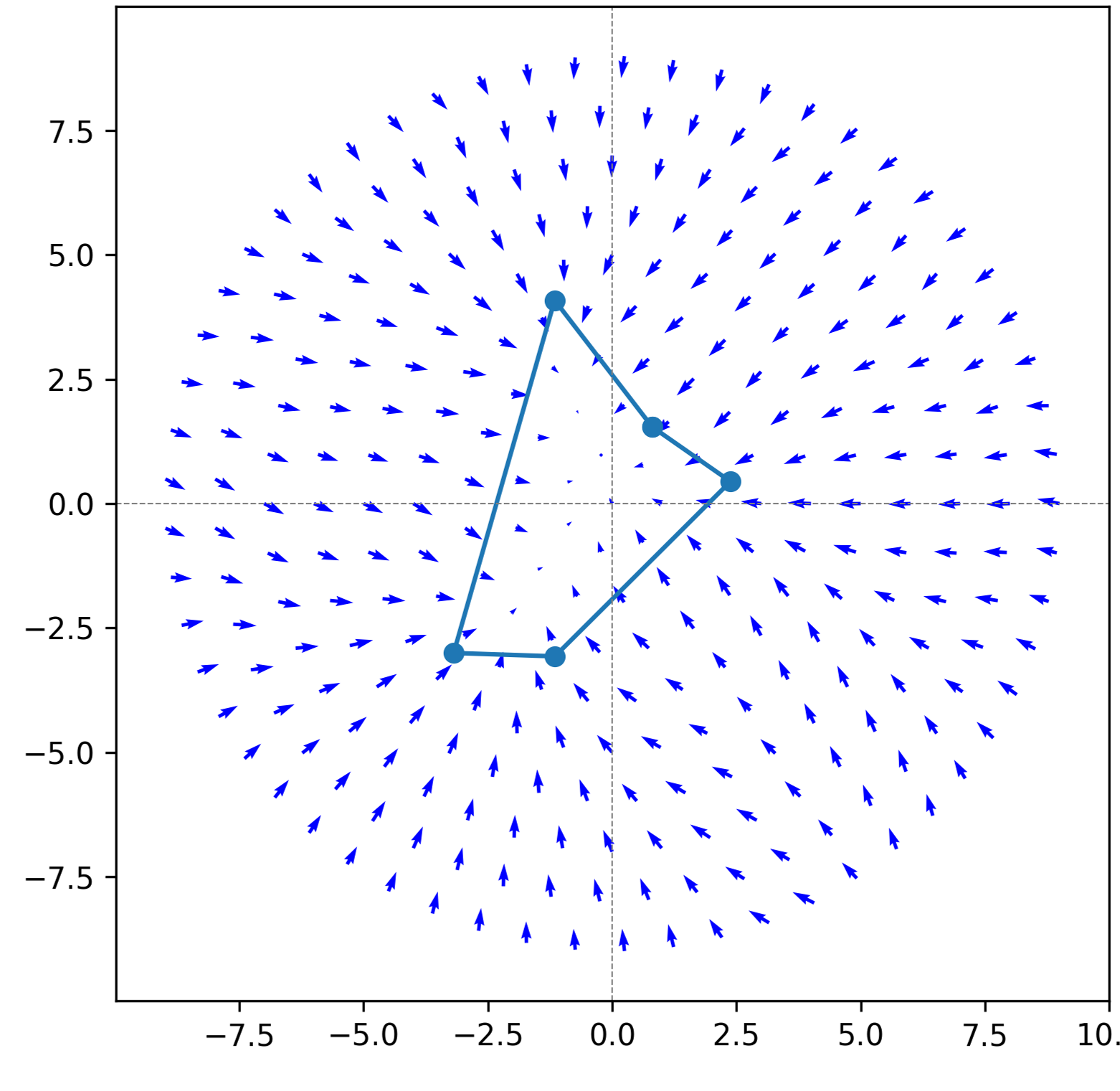}
    \caption{$u^*(x)$}
  \end{subfigure}
  \caption{GP training results for a stationary random polygon
  with $20000$ epochs and maximum $100$ steps for each epoch: (a) shows the contours of the learned value function and (b) shows the learned control vector field.}
  \label{fig:random_target}
\end{figure}

\paragraph{Moving elements.}
We \emph{deliberately omitted} the relative velocity (and any element velocity) from the learner's state.
In the simulation, the agent was controlled using a bounded planar input $u\in\mathbb{R}^2$ (with $\|u\|\le u_{\max}$), and the relative dynamics take the generic form of an input-affine system Eq.~\ref{eq:Input-Affine_System} (constant velocity motion), which however remains unknown to the learning algorithms. 

The goal of these results is to determine whether the effect of the hidden motion can be captured in the learned value function and policy.

These plots allow us to directly inspect how the learned critic and actor encode the hidden motion of an element.

The results for the moving elements are shown in  Fig.~\ref{fig:moving_target}. 

\begin{figure}[htbp]
  \centering
  \begin{subfigure}[b]{0.35\textwidth}
    \centering
    \includegraphics[width=\linewidth]{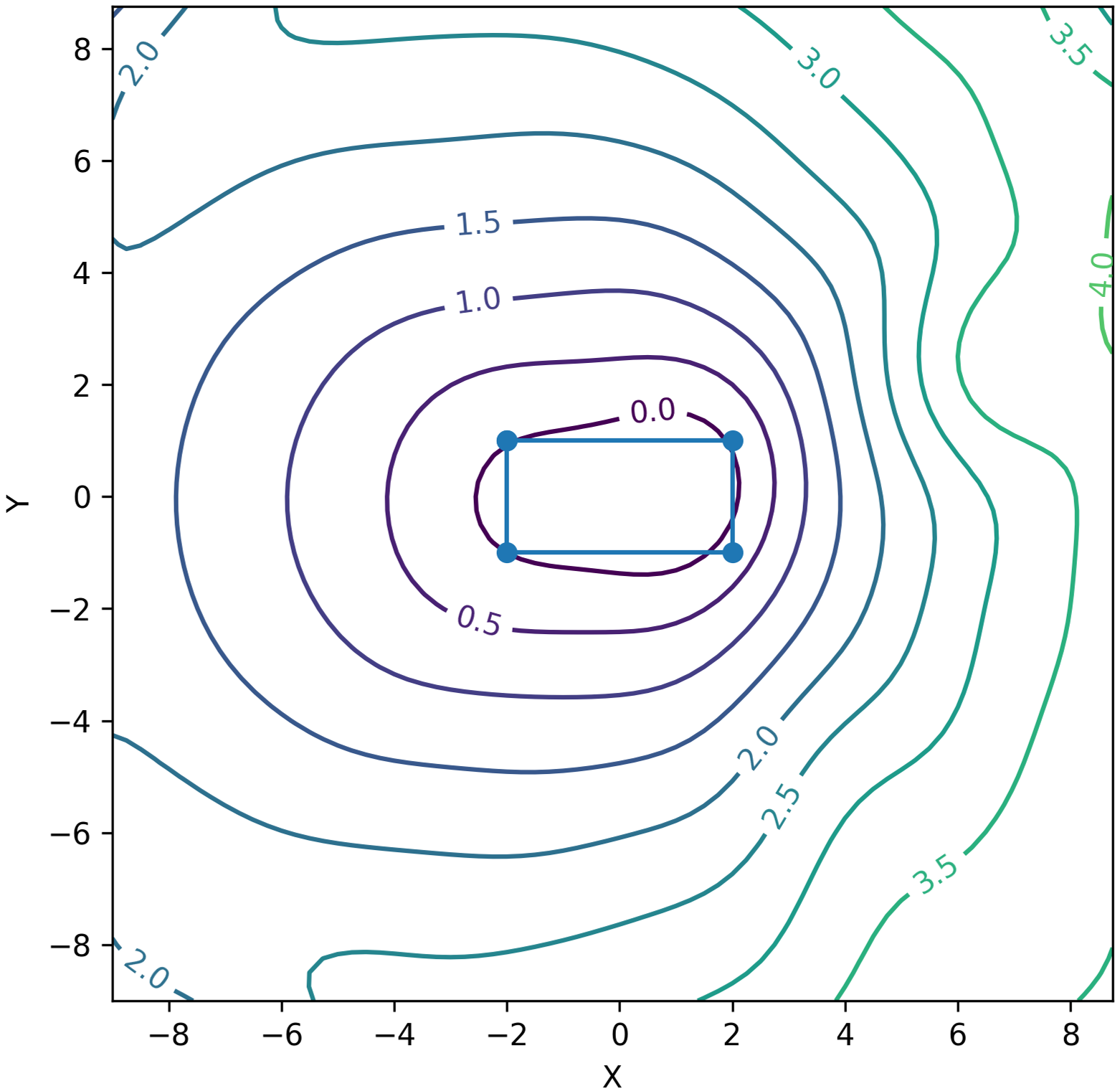}
    \caption{$V(x)$}
    \label{fig:moving_target_V}
  \end{subfigure}
  \begin{subfigure}[b]{0.35\textwidth}
    \centering
    \includegraphics[width=\linewidth]{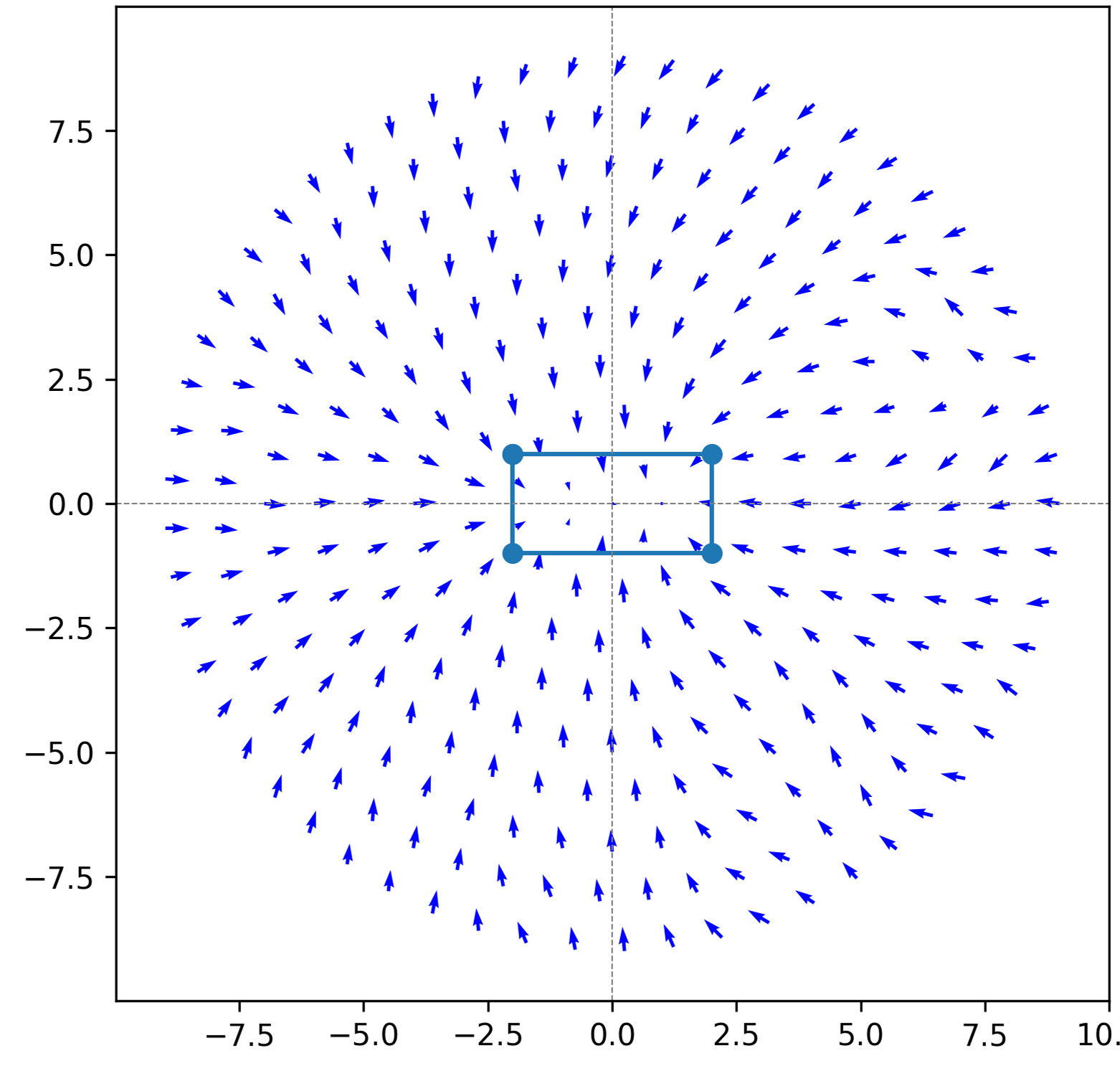}
    \caption{$u^*(x)$}
  \end{subfigure}
  \caption{GP training results for a $4\times 2$ rectangle moving at a speed of $0.8$ along the positive $x$ axis
  with $20000$ epochs and maximum $100$ steps for each epoch: (a) shows the contours of the learned value function and (b) shows the learned control vector field.}
  \label{fig:moving_target}
\end{figure}

Intuitively, when $e$ is moving, the hidden drift must be compensated by the policy, and this compensation appears as systematic asymmetries in the learned fields (e.g., stronger repulsive behavior on the ``upstream'' side of a moving element).

As shown in Fig.~\ref{fig:moving_target_V} that the value function clearly shows that the area in front of the moving element has a significantly lower cumulative cost, indicating that this area is much more dangerous (closer to the element) than the back of the element when serving as an obstacle. However, unlike the PPO critic learning results shown in Fig.~\ref{fig:PPO_moving_result_V}, the boundary where $V_j(x)=0$ of the value function learned by our GP model extends beyond the true boundary of the element, whereas the PPO boundary adheres closely to the true boundary of the element.

\begin{figure}[htbp]
  \centering
  \begin{subfigure}[t]{0.35\textwidth}
    \centering
    \includegraphics[width=\linewidth]{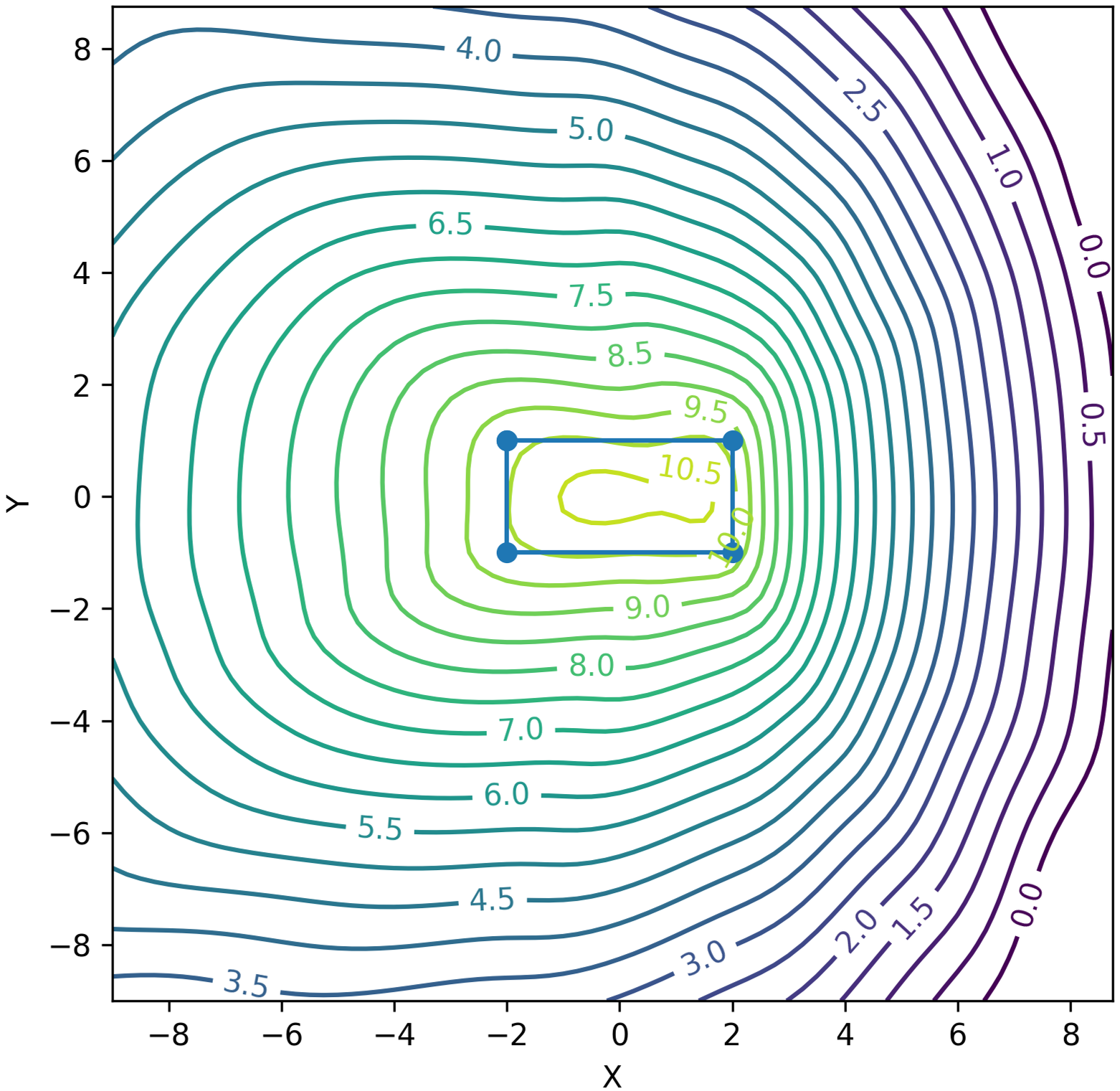}
    \caption{$V_\phi(x)$}
    \label{fig:PPO_moving_result_V}
  \end{subfigure}
  \begin{subfigure}[t]{0.35\textwidth}
    \centering
    \includegraphics[width=\linewidth]{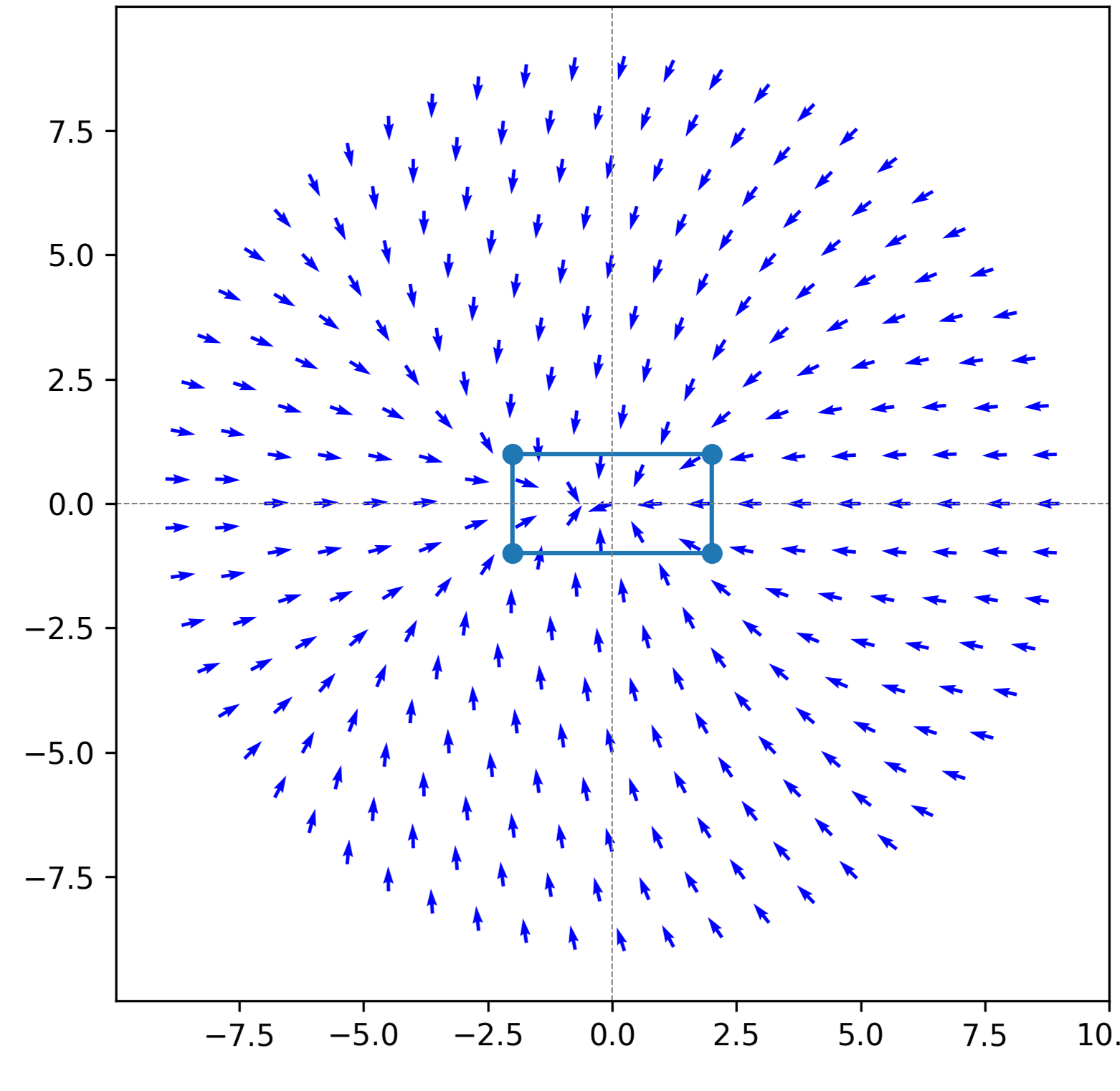}
    \caption{$u_{\theta}(x)$}
    \label{fig:PPO_moving_result_u}
  \end{subfigure}
  \caption{PPO training results for a $4\times 2$ rectangle moving at a speed of $0.8$ along the positive $x$ axis
  with $20000$ epochs and a maximum of $100$ steps for each epoch: (a) shows the contours of the learned value function and (b) shows the mean policy control vector field.}
  \label{fig:PPO_moving_result}
\end{figure}

\section{Modular Composition with Multiple Elements}
\label{sec:modular_deployment}
In this section we propose a method to integrate multiple learned models into a single control system with collision-avoidance and goal-reaching behaviors via a QCQP.

Recall from Sec.~\ref {sec:element-wise_training}, for an element $e_j\in\mathcal{E}$, the value function and the optimal policy estimation are denoted as $\hat{V}_j(x)$ and $\hat{u}^*_j(x)$. For the remainder of this section, we omit the hat from the estimates to simplify notation.

\subsection{CLF-CBF-based Online Composition}
\label{sec:clf-cbf-based-composition}
We propose the use of the value function $V_j$ as the CBF function $H(x)$ in the CBF constraint Eq.~\ref{eq:CBF constaint}:
\begin{equation}
  \dot{V_j}+c_j V_j=\nabla_xV_j^\intercal\dot{x}+c_j V_j \geq 0
\end{equation}

In practice, it is useful to consider a variation of the constraint above that uses a \emph{sharpened} version ${V_j}^q$ of the CBF (to avoid problems where $\nabla V_j$ vanishes near $V_j=0$, $0 <q <1$) and a $V_{\min}$ level set (i.e., $V^q\geq V_{\min}$ to add a safety buffer around the obstacle), yielding
\begin{equation}
\label{eq:CBF constaint mod}
  \dot{{V_j}^q}+c_j ({V_j}^q-V_{\min})=q{V_j}^{(q-1)}\nabla_xV_j^\intercal\dot{x}+c_j(V^q_j-V_{\min}) \geq 0;
\end{equation}

Note that $\dot{V_j}=\nabla_xV_j^\intercal\dot{x}$ is a function affine in action $u$; in our setting, this expression cannot be evaluated directly, but can be obtained from our actor-critic model.
Specifically, we obtain $\nabla_xV_j^\intercal \dot{x}$ from Eq.~\ref{eq:A critic} and substitute the approximation $\hat{A}_{critic}=\hat{A}_{actor}$ to obtain:
\begin{equation}
\begin{split}
\label{eq:V dot}
  \nabla_x V(x)^\intercal \dot{x}=&\hat{A}_{critic}(x,u)-\frac{1}{2}u^\intercal u+\lambda V(x)\\
  =&\frac{1}{2}\bigl(u-u^*(x)\bigr)^\intercal \bigl(u-u^*(x)\bigr)-\frac{1}{2}u^\intercal u+\lambda V(x)\\
  =&\frac{1}{2}\norm{u^*(x)}^2- {u^*(x)}^\intercal u+\lambda V(x);
\end{split}
\end{equation}
As expected, this expression is affine in $u$.

At runtime, we evaluate each element-specific model in the current relative state to obtain $V_{j}(x_{j}), u_{j}^*(x_{j})$. The goal is to choose action $u$ such that Eq.~\ref{eq:CBF constaint} is always satisfied, thus guaranteeing safety (i.e., $V_j\geq 0$). We then solve the following small QCQP that plays the same \emph{architectural role} as CLF-CBF QPs: it selects a control input that aims for goal-seeking behavior while enforcing per-obstacle safety constraints, and a maximum speed $u_{max}$:
\begin{subequations}
\label{eq:qp_controller_modular}
\begin{align}
  u^{\mathrm{QCQP}} &\arg\min_{u}\;\|u-u_g\|^2\\
    \subjectto
  & qV_j^{(q-1)} \left.u_j^\ast\right.^{\intercal} u \leq c_j(V^q_j-V_{\min}) - qV_j^{(q-1)}\left( \frac{1}{2}\norm{u_j^*}^2+\lambda V(x) \right) \nonumber\\
    &\qquad j\in\{1,\dots,M\},\\
    &\norm{u}\le u_{\max}.
\end{align}
\end{subequations}

Intuitively, each obstacle limits the component of the action along the learned approach-to-contact direction, thereby preventing the controller from choosing actions that would aggressively decrease the corresponding $V_j$.

\begin{remark}[Relationship with CLF-CBF QPs]
  Classical CLF-CBF QPs enforce constraints on Lie derivatives (and therefore require the drift and control vector fields $f(x),g(x)$) to guarantee stability/safety properties. In contrast, Eq.~\ref{eq:qp_controller_modular} is \emph{model-free}: it uses only learned value functions and direction fields, yet it retains the key scalability property of optimization-based safety filters---adding an obstacle corresponds to adding one inequality constraint.
\end{remark}

\subsection{Simulation Results after Composing Multiple Moving Obstacles and a Static Goal}
\label{sec:Simulation_Results}

In this section, we present the training results in 2-D simulations to show the training results and collision avoidance via real-time composition.

\paragraph{Simulation Setup.}
\label{sec:sim_setup}

Finally, to demonstrate compositionality, we created a library of actor-critic pairs trained on \emph{isolated} elements (steady and moving). 
For each element configuration (shape and motion profile), we train an element-specific primitive consisting of a value function $V_{j}(x_{j})$ and an optimal policy $u_{j}^*(x_{j})$, using the same actor--critic procedure for both goals and obstacles. 

We then deploy them in a \emph{previously unseen} multi-element scenario: a road crossing setting with multiple moving vehicles and a goal region. It is a simulation of a street-crossing scenario to demonstrate the capabilities of our system. The red rectangles represent moving cars. The blue square represents the goal area. The green dot represents the agent.

The controller is synthesized online by composition and without any additional training, and the goal and obstacle are combined online by the QCQP controller of Sec.~\ref{sec:modular_deployment}, which mediates goal-seeking and collision avoidance in real time.

\begin{figure}[htbp]
  \centering
  \begin{subfigure}[b]{0.26\textwidth}
    \centering
    \includegraphics[width=\linewidth]{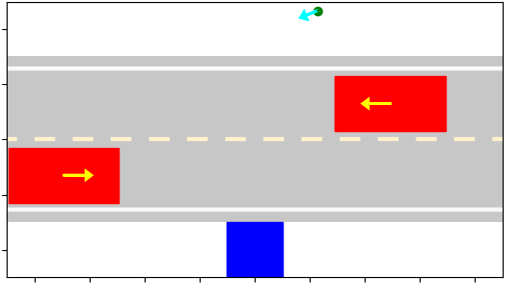}
    \caption{Step $0$}
  \end{subfigure}\hfill
  \begin{subfigure}[b]{0.26\textwidth}
    \centering
    \includegraphics[width=\linewidth]{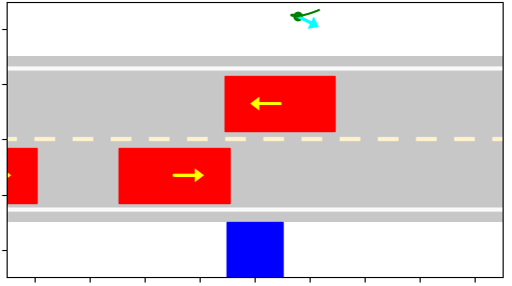}
    \caption{Step $50$}
  \end{subfigure}\hfill
  \begin{subfigure}[b]{0.26\textwidth}
    \centering
    \includegraphics[width=\linewidth]{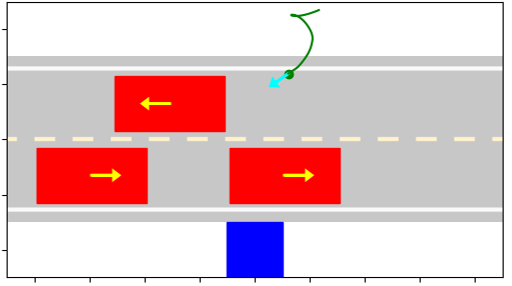}
    \caption{Step $100$}
  \end{subfigure}\hfill
  \begin{subfigure}[b]{0.26\textwidth}
    \centering
    \includegraphics[width=\linewidth]{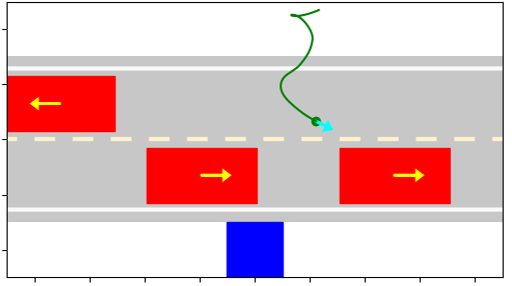}
    \caption{Step $150$}
  \end{subfigure}\hfill
  \begin{subfigure}[b]{0.26\textwidth}
    \centering
    \includegraphics[width=\linewidth]{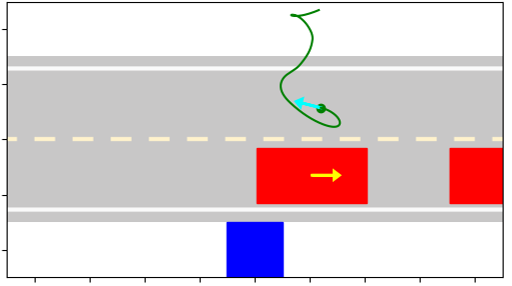}
    \caption{Step $200$}
  \end{subfigure}\hfill
  \begin{subfigure}[b]{0.26\textwidth}
    \centering
    \includegraphics[width=\linewidth]{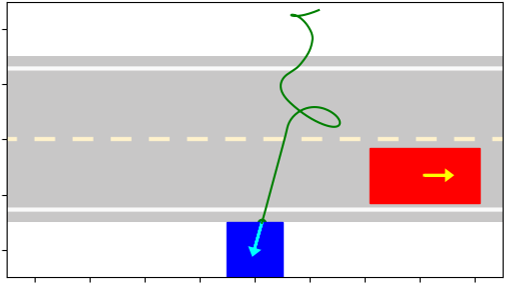}
    \caption{Step $251$}
  \end{subfigure}
  \caption{GP simulation results for a street crossing scenario with two-way traffic.}
  \label{fig:GPL_cross_street}
\end{figure}

\paragraph{Simulation results.}
The simulation results obtained by composing the models trained via the GP are shown in Fig.~\ref{fig:GPL_cross_street} 

At the beginning of the simulation, the agent only sensed the car approaching from the right. However, if the agent attempts to overtake the car and cross the street, it deviates substantially from the target direction. Therefore, it stopped at the position closest to the goal, waited for the car to pass, and continued crossing the street. The agent performs a waiting-and-pass cycle for cars traveling in the opposite directions. After all cars have passed, the agent adheres to the goal-reaching controls without interference.


We can apply the same composition strategies to the training obtained via PPO. The results are shown in Fig.~\ref{fig:PPO_cross_street} where the PPO-trained functions are tested in the same scenario as shown in Fig.~\ref{fig:GPL_cross_street}. 
Although the actions selected by the two systems are not identical, both methods exhibit a similar high-level behavior, alternating between periods of dwelling in place and moving toward the goal.

\begin{figure*}[htbp]
  \centering
  \begin{subfigure}[t]{0.26\textwidth}
    \centering
    \includegraphics[width=\linewidth]{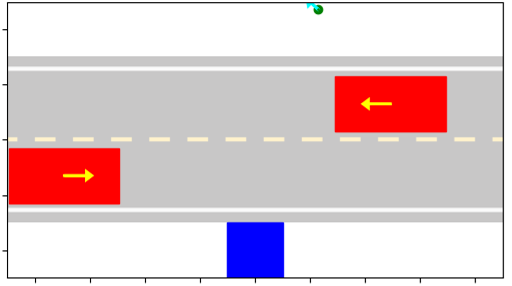}
    \caption{PPO: Step $0$}
  \end{subfigure}\hfill
  \begin{subfigure}[t]{0.26\textwidth}
    \centering
    \includegraphics[width=\linewidth]{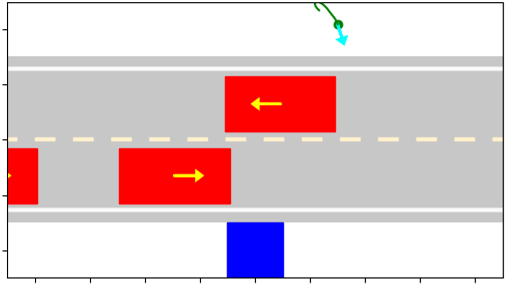}
    \caption{PPO: Step $50$}
  \end{subfigure}\hfill
  \begin{subfigure}[t]{0.26\textwidth}
    \centering
    \includegraphics[width=\linewidth]{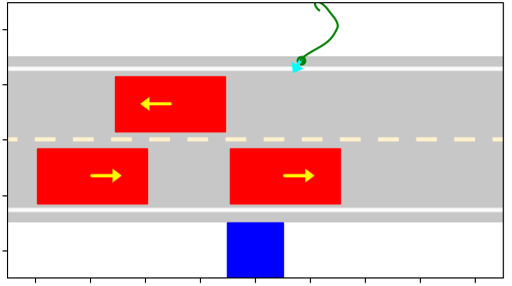}
    \caption{PPO: Step $100$}
  \end{subfigure}\hfill
  \begin{subfigure}[t]{0.26\textwidth}
    \centering
    \includegraphics[width=\linewidth]{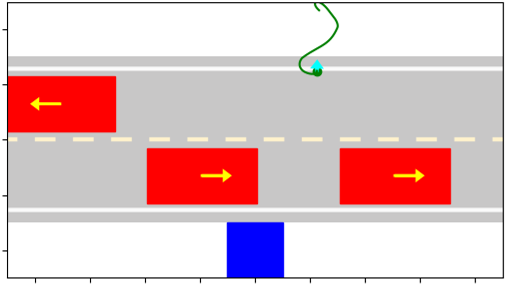}
    \caption{PPO: Step $150$}
  \end{subfigure}\hfill
  \begin{subfigure}[t]{0.26\textwidth}
    \centering
    \includegraphics[width=\linewidth]{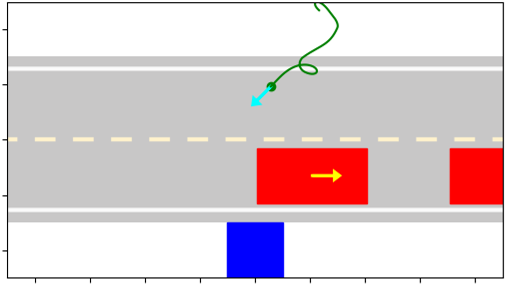}
    \caption{PPO: Step $200$}
  \end{subfigure}\hfill
  \begin{subfigure}[t]{0.26\textwidth}
    \centering
    \includegraphics[width=\linewidth]{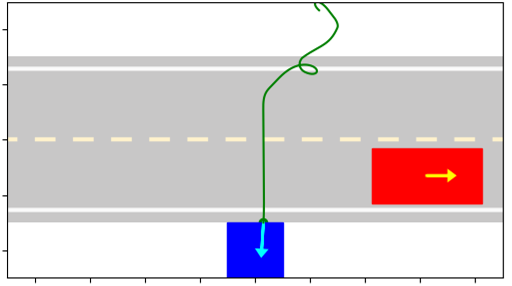}
    \caption{PPO: Step $252$}
  \end{subfigure}
  \caption{PPO simulation results for a street crossing scenario with two-way traffic.}
  \label{fig:PPO_cross_street}
\end{figure*}


\section{Discussion}
As shown in the simulations,  
after training, our method and PPO recovered qualitatively similar structures in both the value function $V(x)$ and the induced optimal control field $u^*(x)$. This suggests that in our setting, our method and PPO can achieve similar training outcomes.


We draw two conclusions:
\begin{enumerate*}
    \item First, although our framework is derived from a continuous-time HJB formulation, PPO remains a useful baseline because it can be trained on a discrete-time approximation of the same environment and objective, and its learned policy can be executed in the same online QP-based controller used during deployment. However, the PPO does not directly optimize our continuous-time HJB residual; instead, it optimizes a discrete-time surrogate objective based on trajectory rollouts and advantage estimates. Bridging these views (e.g., developing policy-optimization updates that are native to continuous time) is an interesting direction for future work.
    \item Second, our modular and composable architecture is not tied to a specific learner: any method that provides a critic estimating a value function and an actor providing a state-dependent policy can be integrated into the same online QP to balance goal-reaching and safety in environments more complex than those encountered during training.
\end{enumerate*}

\section{Conclusion and Future Work}

We introduced a composable, model-free reinforcement learning framework for collision-avoidance navigation in continuous time for input-affine systems. The key idea is to train an \emph{independent} actor-critic (a value function and corresponding policy field for each goal or obstacle element) using a continuous-time discounted HJB residual that only requires state trajectories, time derivatives (estimated from data), and termination events. At deployment time, these independently trained actor-critic pairs are combined online through linear constraints in a QCQP that plays the same architectural role as CLF-CBF QPs, seeking optimal goal-seeking behavior while enforcing per-obstacle safety constraints, and they scale gracefully by adding one constraint per obstacle.

Using Gaussian Processes as the framework backbone of our general function-approximation template, we demonstrated that the learned results recover interpretable value functions and optimal control fields for both stationary and moving elements, capturing the hidden drift factor $f(x)$ introduced by element motion even when the learner observes only the relative position. We further showed that composing results trained in isolation enables real-time navigation in previously unseen multi-element scenarios such as a crosswalk with two-way traffic. Finally, we compared against PPO trained on a discrete-time approximation of the same environment and found qualitatively similar learned structures and comparable collision-avoidance behavior under the same QCQP-based deployment controller.

Several directions remain for future research. On the learning side, it is important to study the stability and safety properties of the composed controller and to characterize the effect of approximation errors in learned results on closed-loop behavior. On the modeling side, scaling beyond low-dimensional relative states requires sparse or structured approximations and careful treatment of partial observability when the hidden dynamics are significant. Finally, a major advantage of Gaussian Processes is their ability to quantify uncertainty; we plan to treat the HJB residual as a measurement and propagate uncertainty over the stored base-point parameters via Kalman-filter-style covariance updates. Such calibrated uncertainty estimates can be integrated into the online QCQP (e.g., via risk-sensitive or chance-constrained formulations) to improve robustness in dynamic and uncertain environments.

%
%
%
\bibliographystyle{splncs04}
\bibliography{citation}

\end{document}